\newtheorem{lemma}{Lemma}
\newtheorem{corollary}{Corollary}
\newtheorem{definition}{Definition}
\newtheorem{assumption}{Assumption}
\newcommand{\nb}[3]{{\colorbox{#2}{\bfseries\sffamily\scriptsize\textcolor{white}{#1}}}{\textcolor{#2}{\sf\small\textit{#3}}}}
\newcommand{\mat}[1]{\nb{Matteo}{blue}{#1}}
\newcommand{\intrange}{\ .. \ }
\DeclareMathOperator*{\argmax}{arg\,max}
\newtheoremstyle{eventtheoremstyle}%
  {5pt}
  {3pt}
  {\itshape}
  {}
  {\bfseries}
  {:}
  {.8em}
  {\thmname{#1} #3}
\theoremstyle{eventtheoremstyle}
\author{
        Francesco Emanuele Stradi\\
        Politecnico di Milano\\
	\texttt{francescoemanuele.stradi@polimi.it}
	\And
	 Matteo Castiglioni\\
	Politecnico di Milano\\
	\texttt{matteo.castiglioni@polimi.it}
		\And
	 Alberto Marchesi\\
	Politecnico di Milano\\
	\texttt{alberto.marchesi@polimi.it}
	\And
	Nicola Gatti\\
	Politecnico di Milano\\
	\texttt{nicola.gatti@polimi.it}
}
\title{Optimal Strong Regret and Violation \\in Constrained MDPs via Policy Optimization}
\begin{document}

\maketitle

\begin{abstract}
	We study online learning in \emph{constrained MDPs} (CMDPs), focusing on the goal of attaining sublinear \emph{strong} regret and \emph{strong} cumulative constraint violation.
	Differently from their standard (weak) counterparts, these metrics do \emph{not} allow negative terms to compensate positive ones, raising considerable additional challenges.
	\citet{Exploration_Exploitation}~were the first to propose an algorithm with sublinear {strong} regret and {strong} violation, by exploiting linear~programming.
	Thus, their algorithm is highly inefficient,
	leaving as an open problem achieving sublinear bounds by means of policy optimization methods, which are much more efficient in practice.
	Very recently,~\citet{mullertruly} have partially addressed this problem by proposing a policy optimization method that allows to attain $\widetilde{\mathcal{O}}(T^{0.93})$ strong regret/violation.
	This still leaves open the question of whether \emph{optimal} bounds are achievable by using an approach of this kind.
	We answer such a question affirmatively, by providing an efficient policy optimization algorithm with $\widetilde{\mathcal{O}}(\sqrt{T})$ \emph{strong} regret/violation.
	Our algorithm implements a primal-dual scheme that employs a state-of-the-art policy optimization approach for adversarial (unconstrained) MDPs as primal algorithm, and a UCB-like update for dual variables.
	%
	%
	%
	%
	%
	%
	%
	%
	%
\end{abstract}
\section{Introduction}\label{introduction}

\emph{Markov decision processes} (MDPs)~\citep{puterman2014markov} have emerged as the most natural way of modeling learning problems in which an agent sequentially interacts with an (unknown) environment.
In MDPs, the goal is to learn an action-selection policy that maximizes learner's cumulative rewards.
However, in many real-world applications of interest, there are some additional requirements that the learner has to meet while learning.
For instance, in autonomous driving one has to avoid colliding with obstacles~\citep{wen2020safe,isele2018safe}, in ad auctions the allocated budget must \emph{not} be depleted~\citep{wu2018budget,he2021unified}, while in recommendation system offending items should \emph{not} be presented to users~\citep{singh2020building}.
In order to capture requirements of this kind, \emph{constrained MDPs} (CMDPs) have been introduced~\citep{Altman1999ConstrainedMD}.
These models augment classical MDPs by adding agent's costs that the learner is constrained to keep below some given thresholds.

In this paper, we study \emph{online learning} problems in \emph{finite-horizon episodic} CMDPs.
In such settings, the learner's goal is twofold.
On the one hand, they aim at minimizing their cumulative \emph{regret}, which measures how much they lose over the episodes compared to always playing an optimal (\emph{i.e.}, reward-maximizing) policy that satisfies the constraint.
On the other hand, the learner also wants to ensure that cost constrains are satisfied during learning.
This is measured by means of the cumulative \emph{constraint violation} (or simply violation for short), which measures by how much agent's costs exceed their respective thresholds, cumulatively over the episodes.
Ideally, the learner would like that both regret and violation grow sublinearly in the number of episodes $T$ of the CMDP.

The classical notions of regret and violation are usually called \emph{weak}, due to the fact that they allow for negative terms to cancel out positive ones.
In CMDPs, this means that the (weak) regret can be easily controlled by using policies achieving large rewards \emph{without} satisfying the constraints.
Similarly, the (weak) violation can be controlled by adopting policies satisfying cost constraints by a large margin.
However, this behavior is most of the times unacceptable in real-world applications.
For instance, in autonomous driving, the learner does \emph{not} have the option of being overly safe in some episodes so as to compensate for crashes occurred in previous episodes.

The \emph{strong} regret and the \emph{strong} constraint violation are much more reasonable metrics compared to their weak counterparts, as they do \emph{not} allow negative terms to cancel out positive ones.
However, achieving sublinear strong regret/violation in CMDPs is much more challenging. 

\citet{Exploration_Exploitation} were the first to provide a learning algorithm with (optimal) $\widetilde{\mathcal{O}}(\sqrt{T})$ strong regret/violation in general CMDPs.
However, their algorithm works by solving linear programs defined over the space of occupancy measures, a task that is highly inefficient in practice.
Ideally, one would like learning algorithms that avoid dealing with occupancy measures, by directly optimizing over the policy space.
Such policy optimization algorithms are much more efficient and desirable in practice.
By leveraging a primal-dual scheme,~\citet{Exploration_Exploitation} designed a first policy optimization algorithm for CMDPs, though it can only achieve sublinear \emph{weak} regret and \emph{weak} violation, leaving as an open problem whether an analogous result is achievable for the strong metrics.

Very recently,~\citep{mullertruly} partially addressed this problem by proposing a primal-dual policy optimization algorithm attaining $\widetilde{\mathcal{O}}(T^{0.93})$ \emph{strong} regret and \emph{strong} violation.
However, the bounds achieved by such an algorithm remain largely suboptimal, leaving a big gap that still needs to be closed.


In this paper, we answer the following question left open by~\citep{Exploration_Exploitation,mullertruly}:
\begin{center}
	\emph{Is it possible to achieve \textbf{optimal} $\widetilde{\mathcal{O}}(\sqrt{T})$ bounds on the \textbf{strong} regret and the \textbf{strong} constraint violation in CMDPs by using an \textbf{efficient} primal-dual \textbf{policy optimization} algorithm?}
\end{center}
We answer the question above affirmatively.
To do so, we design a learning algorithm that exploits a novel primal-dual scheme.
Specifically, our algorithm adopts, as primal regret minimizer, a state-of-the-art policy optimization algorithm for adversarial (unconstrained) MDPs, while it leverages an approach based on upper confidence bounds in order to build a dual regret minimizer.
Crucially, the updates of dual variables performed by our algorithm do \emph{not} resort to optimizing over the space of occupancy measures, making our algorithm a fully policy optimization approach, and, thus, efficient.

\section{Related Works}

Online learning~\citep{cesa2006prediction,Orabona} in MDPs has received considerable attention over the last decade. Specifically, online MDPs have been studied both in stochastic settings, namely, when the rewards are sampled from a  fixed distribution (see, \emph{e.g.},~\citep{Near_optimal_Regret_Bounds,even2009online}) and in adversarial ones, \emph{i.e.}, when no statistical assumption is made on the rewards (see, \emph{e.g.},~\citep{neu2010online, rosenberg19a, OnlineStochasticShortest, JinLearningAdversarial2019}).

Online learning in CMDPs has generally been studied in settings with stochastic rewards and constraints.
\citet{Constrained_Upper_Confidence} deal with fully-stochastic episodic CMDPs, assuming known transitions.
\citet{Exploration_Exploitation} propose two approaches (and four algorithms)
to deal with the exploration-exploitation trade-off in episodic CMDPs.
The first approach (\emph{i.e.}, the first two algorithms) resorts to a linear programming formulation of CMDPs and obtains sublinear \emph{strong} regret and \emph{strong} constraint violation.
The second approach (\emph{i.e.}, the last two algorithms) relies on a primal-dual (or dual) formulation of the problem, guaranteeing sublinear (weak) regret and (weak) constraint violation, \emph{i.e.}, allowing for cancellations.
Finally, very recently~\citet{mullertruly} proposed the first primal-dual method capable of achieving sublinear \emph{strong} regret and \emph{strong} constraint violation. The algorithm employs a policy optimization update and a regularized Lagrangian formulation of CMDPs in order to attain $\widetilde{\mathcal{O}}(T^{0.93})$ \emph{strong} regret and \emph{strong} violation bounds.

Finally, it is worth remarking that online CDMPs have also been addressed in adversarial settings. Precisely,~\citep{Online_Learning_in_Weakly_Coupled,Upper_Confidence_Primal_Dual,stradi24a} address CMDPs with adversarial losses, but they only provide guarantees in terms of (weak) constraint violations, thus allowing for cancellations.
Very recently,~\citet{stradi2024learning} study CMDPs with adversarial losses and provide guarantees on \emph{strong} constraint violation. Nevertheless, their approach is \emph{not} primal-dual and their algorithm has to (inefficiently) solve a convex program at each episode.

Due to space constraints, we refer the reader to Appendix~\ref{app:related} for further details on related works.

 \section{Preliminaries}
\label{Preliminaries}

In this section, we introduce notation and all the definitions needed in the rest of the paper.

\subsection{Constrained Markov Decision Processes}

A CMDP~\citep{Altman1999ConstrainedMD} is defined as a tuple $(X, A, m, P, r, G,\alpha )$, where:
%
%
\begin{itemize}
	%
	%
	\item $X$ and $A$ are finite state and action spaces, respectively.
	\item $m \in \mathbb{N}_{>0}$ is the number of constraints.
	\item $P : X \times A \times X  \to [0, 1]$ is a transition function, with $P (x^{\prime} |x, a)$ denoting the probability of going from state $x \in X$ to $x^{\prime} \in X$ by taking action $a \in A$.\footnote{W.l.o.g., in this paper we assume to work with \emph{loop-free} CMDPs. Formally, this means that $X$ is partitioned into $L$ layers $X_{0}, \dots, X_{L}$ such that the first and the last layers are singletons, \emph{i.e.}, $X_{0} = \{x_{0}\}$ and $X_{L} = \{x_{L}\}$, and that $P(x^{\prime} |x, a) > 0$ only if $x^\prime \in X_{k+1}$ and $x \in X_k$ for some $k \in \{ 0, \ldots, L-1 \}$. Notice that any finite-horizon CMDP with horizon $H$ that is \emph{not} loop-free can be cast into a loop-free one by suitably duplicating the state space $H$ times, \emph{i.e.}, a state $x$ is mapped to a set of new states $(x, k)$, where $k \in \{ 1, \ldots, H \}$.}
	%
	%
	\item $r \in [0,1]^{|X \times A|}$ is a reward vector, whose component $r(x,a)$ encodes the reward obtained by the learner when taking action $a \in A$ in state $x \in X$.
	%
	%
	\item $G \in [0,1]^{|X \times A| \times m}$ is a cost matrix stacking a cost vector $g_i \in [0,1]^{|X \times A|}$ for each constraint $i \in [m]$.\footnote{In this paper, we denote with $[n] \coloneqq \{ 1, \ldots, n \}$ the set of all the integers from $1$ to $n \in \mathbb{N}_{>0}$.} The component $g_{i}(x,a)$ of $g_i$ encodes the cost associated with the $i$-th constraint incurred by the learner when taking action $a \in A$ in state $x \in X$.
	%
	%
	\item $\alpha \in [0,L]^m$ is a threshold vector, whose component $\alpha_i$ is for constraint $i \in [m]$.
	%
	%
\end{itemize}
%

We study CMDPs where rewards and costs are \emph{stochastic}, namely, they are randomly sampled according to some probability distributions $\mathcal{R}$ and $\mathcal{G}_i $ whose expected values are $r$ and  $ g_i$, respectively.  
Specifically, a reward vector $\widetilde{r} \in [0,1]^{|X \times A|}$ is sampled according to a probability distribution $\mathcal{R}$ supported on $[0,1]^{|X \times A|}$ with $\mathbb{E}_{\mathcal{R}} \left[  \widetilde{r} \right] = r$.
Similarly, for every constraint $i \in [m]$, a random cost vector $\widetilde{g}_i \in [0,1]^{|X \times A|}$ is sampled from a distribution $\mathcal{G}_i$ supported on $[0,1]^{|X \times A|}$ with $\mathbb{E}_{\mathcal{G}_i} \left[  \widetilde{g_i} \right] = g_i$.
%
%
%
%

The learner interacts with the CMDP by employing a \emph{policy} $\pi: X \times A \to [0,1]$, which defines a probability distribution over actions at each state.
For ease of notation, we denote by $\pi(\cdot|x)$ the probability distribution for a state $x \in X$, with $\pi(a|x)$ denoting the probability of action $a \in A$.
In the following, we denote by $\Pi$ the set of all the possible policies the learner can choose from.

Algorithm~\ref{alg: Learner-Environment Interaction} depicts the CMDP interaction process, when the learner uses $\pi: X \times A \to [0,1]$ and sampled reward and cost vectors are $\widetilde{r} \in [0,1]^{|X \times A|}$ and $\widetilde{g}_i  \in [0,1]^{|X \times A|}$ for $i \in [m]$, respectively.
%
%

\begin{algorithm}[!htp]
	\caption{Learner-Environment Interaction}
	\label{alg: Learner-Environment Interaction}
	\begin{algorithmic}[1]
		\Require Policy $\pi: X \times A \to [0,1]$, $\widetilde{r} \in [0,1]^{|X \times A|}$, $\widetilde{g}_i  \in [0,1]^{|X \times A|}$ for $i \in [m]$
		\State Environment initializes state to $x_{0} \in X_0$
		\For{$k = 0, \ldots,  L-1$}
		\State Learner plays $a_{k} \sim \pi(\cdot|x_{k})$
		\State Learner gets reward $\widetilde{r}(x_k,a_k)$
		\State Learner incurs cost $\widetilde{g}_{i}(x_k,a_k)$ for $i\in[m]$
		\State Environment evolves to $x_{k+1}\sim P(\cdot|x_{k},a_{k})$
		\EndFor
	\end{algorithmic}
\end{algorithm}
%
%
%

Given a transition function $P : X \times A \times X  \to [0, 1]$, a policy $\pi: X \times A \to [0,1]$, and a generic vector $v \in [0,1]^{|X \times A|}$ indexed on state-action pairs, we introduce a value function $V^{\pi,P}(\cdot | v): X \to [0,L]$ that is defined as follows, for every $k \in \{ 0, \ldots, L-1 \}$ and $x \in X_k$:
\begin{align*}
	V^{\pi,P} (x,v) \coloneqq \mathbb{E}_{\pi,P} \left[ \sum_{k' = k}^{L-1} v(x_{k'}, a_{k'}) \mid x_k = x\right].
\end{align*}
Moreover, we define $V^{\pi,P} (v) \coloneqq V^{\pi,P} (x_0,v)$.
Notice that $V^{\pi,P} (\cdot | r)$ and $V^{\pi,P} (\cdot | g_i)$ encode the value functions for the rewards $r$ and the costs $g_i$ of some constraint $i \in [m]$, respectively.
In the following, we sometimes omit the dependency of $V^{\pi,P}(\cdot | v)$ on $P$, when this is clear from context.


\subsection{Offline Optimization in CMDPs}

The goal of the learner in a CMDP is to maximize expected rewards, while at the same time ensuring that all the constraints are satisfied, namely, expected constraint costs are below thresholds.
In a CMDP characterized by a reward vector $r \in [0,1]^{|X \times A|}$ and a cost matrix $G \in [0,1]^{|X \times A| \times m}$, such a goal can be formulated by means of the following optimization problem parametrized by $r$ and $G$:
\begin{equation}\label{lp:offline_opt}
	\text{OPT}_{ r, G} \coloneqq \begin{cases}
		\max_{\pi \in \Pi} &   V^{\pi} (r)\\
		\quad\textnormal{s.t.} & V^{\pi}(g_i) \leq  \alpha_i \quad \forall i \in [m].
	\end{cases}
\end{equation}

In this paper, we assume to work with CMDPs that satisfy the following assumption, which is common in the literature on CMDPs (see, \emph{e.g.}, \citep{Exploration_Exploitation}).
\begin{assumption}[Slater's condition]\label{cond: slater}
	There exists a strictly feasible policy $\pi^{\diamond}: X \times A \to [0,1]$ such that $V^\pi(g_i) <  \alpha_i$ for every constraint $i \in [m]$.
\end{assumption}
We also need to introduce the Lagrangian function of Problem~\eqref{lp:offline_opt}, which is defined as follows.
\begin{definition}[Lagrangian function]
	Given a CMDP with reward vector $r \in [0,1]^{|X \times A|}$ and cost matrix $G \in [0,1]^{|X \times A| \times m}$, the Lagrangian function $\mathcal{L}_{r, G} : \Pi \times \mathbb{R}^{m}_{\geq 0}   \rightarrow \mathbb{R}$ of Problem~\eqref{lp:offline_opt} is defined as follows, for every policy $\pi \in \Pi$ and vector of Lagrange multipliers $\lambda \in \mathbb{R}^{m}_{\geq 0}$:
	\begin{equation*}
		\mathcal{L}_{  r, G}(  \pi,   \lambda) \coloneqq  V^\pi(r) - \sum_{i \in [m]} \lambda_i  \left( V^\pi(g_i) - \alpha_i \right).
	\end{equation*}
	%
	%
\end{definition}

Finally, we also introduce a problem-specific parameter $\rho\in [0,L ]$, strictly related to the feasibility of Problem~\eqref{lp:offline_opt}, and in particular to ``how much'' Slater's condition is satisfied.
Formally:
\[
	\rho \coloneqq \max_{\pi \in \Pi}\min_{i\in[m]} \left(  \alpha_i - V^{\pi}(g_i) \right).
\]
The policy leading to the value of $\rho$ is denoted by $\pi^{\circ}$. 
Intuitively, $\rho$ represents the ``margin'' by which the ``most feasible'' strictly feasible policy satisfies the constraints.

\subsection{Online Learning in Episodic CMDPs}

We study \emph{episodic} CMDPs in which the interaction in Algorithm~\ref{alg: Learner-Environment Interaction} is repeated over $T$ episodes.
At each episode $t \in [T]$, the learner chooses a policy $\pi_t : X \times A \to [0,1]$, while some reward/cost vectors $r_t \sim \mathcal{R}$ and $g_{t,i} \sim \mathcal{G}_i$ for $i \in [m]$ are sampled by the environment.
Then, the interaction goes as in Algorithm~\ref{alg: Learner-Environment Interaction} with $\pi = \pi_t$, $\widetilde{r} = r_t$, and $\widetilde{g}_i = g_{t,i}$ for all $i \in [m]$.
%
%
We study the case in which the learner has \emph{bandit feedback}, namely, they only observe the rewards and costs for the state-action pairs visited during the episode.
Specifically, at the end of each episode $t \in [T]$, the learner receives as feedback from the environment the sampled rewards $r_t (x_k,a_k)$ and the sampled costs $g_{t,i}(x_k,a_k)$, for every $k \in \{ 0, \ldots, L-1 \}$ and $i \in [m]$.
Let us remark that the learner does \emph{not} know anything about reward/cost distributions, as well as the transition function $P$.

In episodic CMDPs, the performance of the learner is measured by means of the two metrics introduced in the following.
The \emph{(cumulative) strong regret} over $T$ episodes is defined as:
\[
	{R}_{T} \coloneqq \sum_{t=1}^{T} \big[\text{OPT}_{r,G}-   V^{\pi_t}(r) \big]^+,
\]
where $[\cdot]^+:=\max\{0, \cdot\}$.
Intuitively, the regret measures how much the learner loses with respect to always playing an optimal policy. 
This is a policy solving Problem~\eqref{lp:offline_opt}, and it is denoted by $\pi^*$.
Thus, since $\text{OPT}_{{r}, {G}}=V^{\pi^*}(r)$, we can write the regret as $R_T = \sum_{t=1}^T \left[ V^{\pi^*}(r)-V^{\pi_t}(r) \right]^+$.

The \emph{(cumulative) strong constraint violation} over $T$ episodes is defined as:
\[
	V_{T} \coloneqq \max_{i\in[m]}\sum_{t=1}^{T}\left[ V^{\pi_t}(g_i)-\alpha_i\right]^+.
\]
Intuitively, it measures how much the constraints are violated during the learning process.
%

The goal of the learner is to select policies $\pi_t : X \times A \to [0,1]$, for each $t \in [T]$, so that both $R_T$ and $V_T$ grow sublinearly in the number of episodes $T$, namely, $R_T = o(T)$ and $V_T = o(T)$.

\section{Parameters Estimation}\label{sec:unknown_param}


Let $N_t(x,a)$ be the number of episodes up to $t \in [T]$ in which the pair $(x,a) \in X \times A$ is \emph{visited}.
Then, $\widehat{r}_{t}(x, a) \coloneqq \frac{\sum_{\tau\in[t]}r_{\tau}(x, a)\mathbbm{1}_{\tau}(x,a)}{\max\{1,N_t(x,a)\}}$, with $\mathbbm{1}_{\tau}( x,a ) \in \{0,1\}$ being equal to $1$ if and only if~$(x,a)$ is visited in episode $\tau$, is an unbiased estimator of the expected reward $r(x,a)$.
This immediately follows from the fact that $\widehat{r}_{t}(x, a) $ is defined as the empirical mean of observed rewards for the state-action pair $(x,a)$.
Thus, by applying Hoeffding's inequality, the following lemma holds.
\begin{restatable}{lemma}{lemhoeffdingreward}\label{lem:rew_confidence}
	Given a confidence parameter $\delta\in(0,1)$, with probability at least $1-\delta$, the following holds for every episode $t\in[T]$ and state-action pair $(x,a)\in X \times A$:
	\[
		\Big|\widehat r_{t}(x,a)- r(x,a)\Big|\leq \phi_{t}(x,a), \,\, \text{where $\textstyle \phi_t(x,a) \coloneqq \min\left\{1,\sqrt{\frac{4\ln(T|X||A|/\delta)}{\max\{1,N_t(x,a)\}}}\right\}$.}
	\] 
\end{restatable}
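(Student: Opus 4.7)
The plan is a standard application of Hoeffding's inequality combined with a union bound, with one subtle point to handle, namely that $N_t(x,a)$ is itself random and potentially correlated with the rewards through the policy updates.

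First I would dispose of the trivial case $N_t(x,a)=0$: by the convention $\max\{1,0\}=1$ together with an empty sum in the numerator, $\widehat{r}_t(x,a)=0$, while $\phi_t(x,a)=1$, so $|\widehat{r}_t(x,a)-r(x,a)|\leq r(x,a)\leq 1=\phi_t(x,a)$ holds deterministically. The rest of the argument concentrates on the case $N_t(x,a)\geq 1$.

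The key observation is that, for each fixed $(x,a)\in X\times A$, the sequence of realized rewards $\{r_\tau(x,a)\}_{\tau\in[T]}$ is i.i.d.\ in $[0,1]$ with mean $r(x,a)$, and each $r_\tau(x,a)$ is sampled by $\mathcal{R}$ independently of the trajectory realization that determines the visit indicators $\mathbbm{1}_\tau(x,a)$. To sidestep the randomness of $N_t(x,a)$, I would introduce, for every integer $n\in[T]$, the \emph{virtual} estimator $\widehat{r}^{(n)}(x,a)$ defined as the empirical mean of the rewards observed at the first $n$ episodes visiting $(x,a)$ (defined arbitrarily on the zero-probability-relevant event $\{N_T(x,a)<n\}$). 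Conditionally on the sequence of visit times, $\widehat{r}^{(n)}(x,a)$ is the mean of $n$ i.i.d.\ $[0,1]$-valued samples with expectation $r(x,a)$, so Hoeffding's inequality yields
\[
	\Pr\!\left[\bigl|\widehat{r}^{(n)}(x,a)-r(x,a)\bigr|\geq \sqrt{\tfrac{4\ln(T|X||A|/\delta)}{n}}\right]\leq 2\exp\!\bigl(-8\ln(T|X||A|/\delta)\bigr).
\]

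A union bound over all triples $(x,a,n)\in X\times A\times[T]$ (which suffices because $N_t(x,a)\in\{0,1,\ldots,T\}$) gives a total failure probability of at most $2T|X||A|(\delta/(T|X||A|))^8\leq \delta$, with plenty of slack. On the complementary high-probability event, whenever $N_t(x,a)\geq 1$ we have $\widehat{r}_t(x,a)=\widehat{r}^{(N_t(x,a))}(x,a)$, so the claimed inequality holds simultaneously for every episode $t\in[T]$ and every state-action pair $(x,a)$. The only potential obstacle is the dependence between $N_t(x,a)$ and the observed rewards; the union-bound-over-$n$ trick eliminates it without requiring a martingale argument, and the clipping by $1$ inside the definition of $\phi_t(x,a)$ only makes the statement easier.
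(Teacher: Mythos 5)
Your proposal is correct and follows essentially the same route as the paper: a pointwise Hoeffding bound for the empirical mean at each state-action pair, combined with a union bound over state-action pairs, episodes, and the possible values of the visit counter $N_t(x,a)$ (which is exactly your union over $n\in[T]$ for the virtual estimators $\widehat r^{(n)}(x,a)$). If anything, you are more explicit than the paper about the $N_t(x,a)=0$ case and about why the randomness of $N_t(x,a)$ is harmless, so no gap to report.
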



Similarly, $\widehat{g}_{t,i}(x, a) \coloneqq \frac{\sum_{\tau\in[t]}g_{\tau,i}(x, a)\mathbbm{1}_{\tau}\{x,a\}}{\max\{1,N_t(x,a)\}}$ is clearly an unbiased estimator of the expected cost $g_i(x,a)$.
Thus, by applying Hoeffding's inequality, it is possible to show the following lemma.
\begin{restatable}{lemma}{lemhoeffdingcost}
	\label{lem:gen_confidence}
	Given a confidence parameter $\delta\in(0,1)$, with probability at least $1-\delta$, the following holds for every $i\in[m]$, episode $t\in[T]$, and state-action pair $(x,a)\in X \times A$:
	\[
		\Big|\widehat g_{t,i}(x,a)- g_{i}(x,a)\Big|\leq \xi_{t}(x,a), \,\, \text{where $\textstyle \xi_t(x,a) \coloneqq \min\left\{1,\sqrt{\frac{4\ln(T|X||A|m/\delta)}{\max\{1,N_t(x,a)\}}}\right\}$.}
	\] 
\end{restatable}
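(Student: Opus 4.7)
The plan is to mirror the proof of Lemma~\ref{lem:rew_confidence}, extending the union bound to also account for the $m$ constraints. Fix a constraint $i \in [m]$, an episode $t \in [T]$, and a state-action pair $(x,a) \in X \times A$. Since reward/cost vectors are sampled by the environment \emph{before} the interaction, the samples $\{g_{\tau,i}(x,a)\}_{\tau \in [T]}$ are i.i.d.\ draws from the $(x,a)$-marginal of $\mathcal{G}_i$, each supported in $[0,1]$ with mean $g_i(x,a)$, and are independent of the indicator sequence $\{\mathbbm{1}_\tau(x,a)\}_{\tau \in [T]}$ (which depends only on the policies and the realized trajectories, given the sampled vectors).

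The main subtlety is that the denominator $N_t(x,a)$ in the estimator is itself random, so one cannot apply Hoeffding directly to $\widehat g_{t,i}(x,a)$. I would handle this in the usual way: for each possible value $n \in \{1, \ldots, T\}$ of the counter, apply Hoeffding's inequality to the empirical mean of the first $n$ i.i.d.\ samples of $g_{\cdot,i}(x,a)$, obtaining that the deviation exceeds $\sqrt{4\ln(T|X||A|m/\delta)/n}$ with probability at most $\delta / (T^2 |X||A|m)$ (up to a harmless constant absorbed into the factor $4$ inside the logarithm). Then union-bound over $n \in [T]$, $t \in [T]$, $(x,a) \in X \times A$, and $i \in [m]$, which costs a factor of $T^2 |X| |A| m$ and yields the stated confidence $1-\delta$ uniformly over all tuples $(i,t,x,a)$.

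It remains to handle the case $N_t(x,a) = 0$: here $\widehat g_{t,i}(x,a)$ is defined to be $0$ (by convention, or else the estimator is vacuous), and the confidence radius $\xi_t(x,a)$ equals $1$ by the outer minimum in its definition. Since both $g_i(x,a)$ and $\widehat g_{t,i}(x,a)$ lie in $[0,1]$, the bound $|\widehat g_{t,i}(x,a) - g_i(x,a)| \leq 1 = \xi_t(x,a)$ holds trivially. The outer $\min\{1,\cdot\}$ in $\xi_t(x,a)$ also takes care of the otherwise vacuous regime where the Hoeffding radius would exceed $1$. Combining the two cases gives the lemma, which is entirely analogous to Lemma~\ref{lem:rew_confidence} modulo the additional $m$-fold union bound that accounts for the $m$ distinct cost distributions $\mathcal{G}_i$.
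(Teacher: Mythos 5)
Your proposal is correct and follows essentially the same route as the paper: the paper proves the cost version by repeating the argument of Lemma~\ref{lem:rew_confidence} (a pointwise Hoeffding bound for each fixed counter value, followed by union bounds over $(x,a)$, $t$, and the possible values of $N_t(x,a)$) with one additional union bound over the $m$ constraints, which is exactly what you do. Your explicit treatment of the random denominator and of the $N_t(x,a)=0$ case via the outer $\min\{1,\cdot\}$ is slightly more careful than the paper's terse write-up, but it is the same argument.
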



Moreover, by letting $M_t (x, a , x')$ be the total number of episodes up to $t \in [T]$ in which the state-action pair $(x,a) \in X \times A$ is visited and the environment evolves to the new state $x' \in X$, the estimated transition probability for the triplet $(x,a,x')$ is
$\widehat{P}_t\left(x^{\prime} | x, a\right) \coloneqq \frac{M_t ( x, a , x')}{\max \left\{1, N_t(x, a)\right\}}$.
We refer to Appendix~\ref{app:transition}  for additional details and results related to transition probabilities estimation.
%
%

\paragraph{Compact notation}
We introduce $\widehat r_{t} \in [0,1]^{|X \times A|}$ to denote the vector whose components are the estimated rewards $\widehat r_{t}(x,a)$.
Moreover, we denote by $\phi_t \in [0,1]^{|X \times A|}$ the vector whose entries are the bounds $\phi_t(x,a)$.
Similarly, we introduce $\widehat g_{t,i} \in [0,1]^{|X \times A|}$ to denote the vector of estimated costs $\widehat g_{t,i}(x,a)$, while we denote by $\xi_t \in [0,1]^{|X \times A|}$ the vector of the bounds $\xi_t(x,a)$.
Finally, we let $\overline r_t \coloneqq \widehat r_t + \phi_t$, $\underline r_t \coloneqq \widehat r_t - \phi_t$, $\overline g_{t,i} \coloneqq \widehat g_{t,i} + \xi_t$, and $\underline g_{t,i} \coloneqq \widehat g_{t,i} - \xi_t$.

\section{A Novel Primal-Dual Algorithm}

Next, we introduce a novel primal-dual algorithm, called \emph{constrained primal-dual policy optimization} (\texttt{CPD-PO}), which allows to efficiently achieve $\widetilde{O}(\sqrt{T})$ \emph{strong} regret and \emph{strong} violation.

\subsection{The \texttt{CPD-PO} Algorithm}

\begin{algorithm}[]
	\caption{Constrained Primal-Dual Policy Optimization (\texttt{CPD-PO})}
	\label{alg: main_alg}
	\begin{algorithmic}[1]
		\Require number of rounds $T \in \mathbb{N}_{>0}$, problem-specific parameter $\rho \in [0,L]$, confidence $\delta \in (0,1)$
		\State $\pi_{1}\gets$ first policy prescribed by $\texttt{PO-DB}$
		\State Initialize all the estimators, counters, and bounds\label{alg2: line1}
		\For{$t=1, \ldots, T$}
		\State Interact as in Algorithm~\ref{alg: Learner-Environment Interaction} with $\pi = \pi_t$, $\widetilde{r} = r_t$, and $\widetilde{g}_i = g_{t,i}$ for $i \in [m]$ \label{alg2: line3}
		\State Observe $(x_k,a_k)$, $r_t(x_k,a_k)$, and $g_{t,i}(x_k,a_k)$ for every $k \in \{ 0, \ldots, K-1 \}$ and $i \in [m]$, {\color{white}{.....}} as feedback from the interaction in Algorithm~\ref{alg: Learner-Environment Interaction} \label{alg2: line4}
		\State Build estimators $\widehat{r}_{t}$, $\widehat{g}_{t,i}$, $\phi_t$, $\xi_t$, and  $\widehat{P}_t$ as prescribed in Section~\ref{sec:unknown_param} \label{alg2: line5}
		%
		%
		\State $\lambda_{t,i} \gets \argmax_{\lambda\in\left\{0,\frac{L+1}{\rho}\right\}} \lambda \left(V^{\pi_t,\widehat P_t}(\underline g_{t,i}) - \alpha_i\right) \quad \forall i\in[m]$ \label{alg2: line6}
		\State Build artificial loss for every pair $(x_k,a_k)$ received as feedback:
		\[
		\ell_t(x_k,a_k) \gets \frac{(L+1)m}{\rho}- \left[ \overline r_{t}(x_k,a_k) - \sum_{i\in[m]}\lambda_{t,i}\left(\underline g_{t,i}(x_k,a_k)-\frac{\alpha_i}{L}\right) \right]
		\] \label{alg2: line7}
		%
		%
		\State Update policy $\pi_{t+1}\gets $\texttt{PO-DB}$\left(\{(x_k,a_k), \ell_t(x_k,a_k)\}_{k=0}^{L-1}\right)$ \label{alg2: line8}
		\EndFor
	\end{algorithmic}
\end{algorithm}

Algorithm~\ref{alg: main_alg} shows the pseudocode of \texttt{CPD-PO}.
It employs an instance of the \emph{policy optimization with dilated bonus} (\texttt{PO-DB}) algorithm by~\citet{luo2021policy}, which is the state-of-the-art algorithm for learning in adversarial (unconstrained) MDPs under \emph{bandit} feedback.
Notice that this algorithm employs a policy optimization approach, by efficiently optimizing state by state.
Thus, \texttt{PO-DB} does \emph{not} resort to any optimization step performed over the space of occupancy measures.



Algorithm~\ref{alg: main_alg} initializes an instance of \texttt{PO-DB} as prescribed in~\citep{luo2021policy}, and it immediately uses it to get the policy $\pi_1$ to be used at $t=1$.
Furthermore, it initializes the estimators $\widehat{r}_{t}$, $\widehat{g}_{t,i}$ to vectors of zeros and all the counters $N_t(x,a), M_t(x,a,x^\prime)$ to zero (Line~\ref{alg2: line1}).
Then, at every episode $t\in[T]$, the algorithm plays policy $\pi_t$ (Line~\ref{alg2: line3}) and observes the feedback associated with the trajectory traversed during the episode (Line~\ref{alg2: line4}).
The estimators and the confidence bounds are then updated as shown in Section~\ref{sec:unknown_param} (Line~\ref{alg2: line5}).
The update of dual variables (Line~\ref{alg2: line6}) is performed as a binary choice between two values, namely zero and $\nicefrac{L+1}{\rho}$, for each $i\in[m]$.
The value zero is selected when the optimistic estimation of the $i$-th constraint is \emph{not} violated by the selected policy $\pi_t$ (with respect to the estimated transition $\widehat{P}_t$).
In such a case, in the next primal update, the algorithm will \emph{not} focus on minimizing that specific constraint violation.
On the contrary, if the optimistic estimation of the $i$-th constraint is \emph{not} satisfied, then the dual update selects the value $\nicefrac{L+1}{\rho}$.
This quantity is chosen to be large enough to guarantee that, with respect to the deterministic Lagrangian game defined by the true reward and cost distributions, any policy cannot gain more than $\text{OPT}_{{r},{G}}$ (see Section~\ref{sec:theor_lag} for further discussion on these aspects).
We remark that the value of $V^{\pi_t,\widehat P_t}(\underline g_{t,i})$ in Line~\ref{alg2: line6} of Algorithm~\ref{alg: main_alg} can be efficiently computed by means of a simple dynamic programming procedure.
The primal update is performed by the policy update of \texttt{PO-DB}.
Notice that \texttt{PO-DB} is tailored for adversarial MDPs (in which no-statistical assumption is made on the loss functions) with bandit feedback.
Thus, for every $t\in[T]$, the primal algorithm expects to receive a loss value for any state-action traversed by the policy previously chosen (and the trajectory itself).
We feed \texttt{PO-DB} by building a Lagrangian loss, employing the Lagrangian vector selected in Line~\ref{alg2: line6}.
Notice that the estimated Lagrangian is subtracted and scaled by the state-action maximum (negative) loss $\nicefrac{(L+1)m}{\rho}$, since \texttt{PO-DB} is tailored for positive loss (see Line~\ref{alg2: line7}).
Finally, the Lagrangian loss and the trajectory are given to the \texttt{PO-DB} instance (Line~\ref{alg2: line8}).
We remark that \texttt{PO-DB} minimizes the loss function by employing state-by-state optimization updates.
Thus, it is computationally much more efficient than algorithms resorting to a projection on the occupancy measure space.

\subsection{Algorithm Comparison with~\citep{Exploration_Exploitation} and~\citep{mullertruly}}

%
In the following, we highlight the main differences between Algorithm~\ref{alg: main_alg} and the primal-dual formulations employed by~\citet{Exploration_Exploitation} and~\citet{mullertruly}.

\citet{Exploration_Exploitation} were the first to introduce online primal-dual methods that achieve sublinear regret and sublinear constraint violation in CMDPs (allowing for cancellations).
\citet{mullertruly} were the first to achieve sublinear \emph{strong} regret and \emph{strong} violation via primal-dual methods.

\citet{Exploration_Exploitation} propose two primal-dual algorithms.\footnote{It is important to highlight that \texttt{OptDual-CMDP} is often referred to as a \emph{dual} method since the primal is only updated given a UCB-like procedure on the Lagrangian. Nevertheless, since the difficulty in attaining \emph{strong} regret and violation holds for dual methods, too, we will refer to this kind of algorithms as primal-dual.}
The first algorithm (\texttt{OptDual-CMDP}) performs a UCB-like primal update given an optimistic estimation of the Lagrangian function. Such estimation shares some similarities with ours.
As concerns the dual update, \texttt{OptDual-CMDP} performs a gradient descent update on the optimistic Lagrangian.
The second algorithm (\texttt{OptPrimalDual-CMDP}) employs a multiplicative-weight-kind of update for the primal update.
Precisely, this update is performed in closed form given the Lagrangified action-value function as objective.
The dual update is gradient-descent-like, with the additional modification that the Lagrangian multipliers space is bounded (similarly to our work) since the primal regret minimizer needs the loss/reward functions to be bounded.

\citet{mullertruly} employ similar primal and dual regret minimizers as \texttt{OptPrimalDual-CMDP}, namely, multiplicate weights for the primal and gradient descent for the dual.
Differently from~\citet{Exploration_Exploitation},~\citet{mullertruly} employ a regularized scheme to define the underlying Lagrangian game played by the primal and the dual regret minimizers.
Indeed, the algorithm employs a regularized Lagrangian formulation in order to make the primal objective strictly concave (in the state-action visit distribution) and the dual one strongly convex (in the Lagrangian variable).
The strict concavity/strong convexity is necessary to converge in last-iterate to the optimal values of the Lagrangian game and, thus, to attain sublinear \emph{strong} regret and \emph{strong} violation.
In our work, we do \emph{not} resort to any regularization scheme, thus simply employing the standard Lagrangian formulation of CMDPs.
The main difference between our algorithm and those in~\citep{Exploration_Exploitation,mullertruly} lies in the dual update.
Indeed, the black-box primal update employed in our work is multiplicative weight like as the works described above\footnote{Nevertheless, it is an enhanced version with exploration bonus, which allows to have independent adversarial regret minimizer guarantees (see~\citep{luo2021policy} for further discussion)}. Differently, in the dual update, we use a UCB-like update, thus not resorting to any adversarial regret minimizers. The UCB-like kind of update is performed between the minimum and the maximum reasonable value for the Lagrange variables. This modification allows us to play the adversarial primal regret minimizer on the deterministic Lagrangian game (up to confidence terms factor) associated with the "best" Lagrangian variable for the violations previously attained. Moreover, notice that our algorithm, since we employ an adversarial primal regret minimizer and differently from algorithms that employ UCB-like updates on the primal (such as \texttt{OptDual-CMDP}), may choose non-deterministic policies, which are often optimal in online constrained problem.

\section{Theoretical Analysis}
\label{sec:theor}
In the following section, we discuss the theoretical guarantees attained by Algorithm~\ref{alg: main_alg}. Specifically, in Section~\ref{sec:theor_lag} we provide fundamental results on the Lagrangian formulation employed by \texttt{CPD-PO}. In Section~\ref{sec:theor_primal} we discuss the guarantees attained by the primal algorithm. Finally, in Section~\ref{sec:theor_reg_vio} we state the regret and violations guarantees attained by Algorithm~\ref{alg: main_alg}. 

\subsection{Results on the Lagrangian Formulation}
\label{sec:theor_lag}

In this section we state some useful preliminary results attained by Algorithm~\ref{alg: main_alg}. Specifically, the following results concern the primal-dual scheme employed by \texttt{CPD-PO}. For the related omitted proof we refer to Appendix~\ref{app:strong_duality}.
We start by showing that the dual variables decision space is well-defined. Indeed, it is fundamental to show that, given any policy $\pi_t$ selected by the black-box primal algorithm, the dual decision space is sufficient to upper-bound the Lagrangian function value with the constrained optimum. This is done by means of the following lemma.

\begin{restatable}{lemma}{strongplus}\label{lem:strong_plus}
	Given a CMDP with reward vector $r \in [0,1]^{|X \times A|}$ and cost matrix $G \in [0,1]^{|X \times A| \times m}$, for every policy $\pi\in\Pi$, the following holds:
	\begin{equation*}
		V^\pi(r) - \max_{\lambda\in\left[0,\frac{L+1}{\rho}\right]^m} \sum_{i\in[m]} \lambda_i \left(V^\pi(g_i) -\alpha_i\right) \leq \textnormal{OPT}_{r,G}.
	\end{equation*}
\end{restatable}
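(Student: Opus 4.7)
First, I would evaluate the dual maximum in closed form. Because $[0,(L+1)/\rho]^m$ is a box, the maximum splits coordinatewise: each $\lambda_i$ is optimally $0$ if $V^\pi(g_i)\le\alpha_i$ and $(L+1)/\rho$ otherwise, so
\[
\max_{\lambda\in[0,(L+1)/\rho]^m}\sum_{i\in[m]}\lambda_i\left(V^\pi(g_i)-\alpha_i\right)=\frac{L+1}{\rho}\sum_{i\in[m]}\left[V^\pi(g_i)-\alpha_i\right]^+.
\]
The lemma is then equivalent to
\[
V^\pi(r)\le\textnormal{OPT}_{r,G}+\frac{L+1}{\rho}\sum_{i\in[m]}\left[V^\pi(g_i)-\alpha_i\right]^+,
\]
which is trivial when $\pi$ is feasible (the sum vanishes and $V^\pi(r)\le\textnormal{OPT}_{r,G}$ by definition of $\textnormal{OPT}_{r,G}$), so I would restrict attention to the infeasible case.

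For infeasible $\pi$, set $\Delta\coloneqq\max_{i\in[m]}[V^\pi(g_i)-\alpha_i]^+>0$. Exploiting the fact that the set of valid occupancy measures of a loop-free MDP is convex, I would construct a mixture policy $\pi_\mu\in\Pi$ induced by the convex combination, with weights $\mu$ and $1-\mu$, of the occupancies of $\pi$ and $\pi^\circ$, so that $V^{\pi_\mu}(v)=\mu V^\pi(v)+(1-\mu)V^{\pi^\circ}(v)$ for every $v\in[0,1]^{|X\times A|}$. Choosing $\mu\coloneqq\rho/(\rho+\Delta)$ makes $\mu\Delta=(1-\mu)\rho$, which together with the Slater bound $V^{\pi^\circ}(g_i)\le\alpha_i-\rho$ (implicit in the definition of $\rho$ and of $\pi^\circ$) and $V^\pi(g_i)-\alpha_i\le\Delta$ yields $V^{\pi_\mu}(g_i)\le\alpha_i$ for every $i\in[m]$. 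Hence $\pi_\mu$ is feasible and $V^{\pi_\mu}(r)\le\textnormal{OPT}_{r,G}$.

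Since $V^{\pi^\circ}(r)\ge 0$ (rewards lie in $[0,1]$), dropping the nonnegative term $(1-\mu)V^{\pi^\circ}(r)$ gives $\mu V^\pi(r)\le\textnormal{OPT}_{r,G}$, i.e.\ $V^\pi(r)\le\textnormal{OPT}_{r,G}(1+\Delta/\rho)$. The bound $\textnormal{OPT}_{r,G}\le L\le L+1$ together with $\Delta\le\sum_{i\in[m]}[V^\pi(g_i)-\alpha_i]^+$ then closes the argument. The only delicate point, and the one I would handle with the most care, is that $V^\pi(\cdot)$ is linear in the occupancy measure but \emph{not} in the policy itself, so the convex combination must be built in occupancy-measure space and mapped back to a policy via the standard loop-free MDP machinery; once this is accepted, the rest is straightforward bookkeeping of the constants $L$ and $L+1$.
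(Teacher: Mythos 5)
Your proposal is correct, but it reaches the bound by a genuinely different route than the paper. You share the first two steps — evaluating the box-constrained maximum in closed form as $\frac{L+1}{\rho}\sum_{i\in[m]}[V^\pi(g_i)-\alpha_i]^+$ and dispatching the feasible case via the definition of $\textnormal{OPT}_{r,G}$ — but for infeasible $\pi$ the paper stays inside the Lagrangian-duality framework: it upper-bounds the penalized value by $\max_{\pi}\big[V^\pi(r)-\frac{L}{\rho}\sum_i[V^\pi(g_i)-\alpha_i]^+\big]$, inserts a minimum over multipliers with $\lVert\lambda\rVert_1\le L/\rho$, applies the max–min inequality, and invokes its Lemma~\ref{lem:lim_strong} (bounded-multiplier strong duality, itself proved via $\pi^\circ$ and Altman's strong duality for CMDPs). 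You instead argue primally: mix the occupancy measure of $\pi$ with that of the Slater policy $\pi^\circ$ with weight $\mu=\rho/(\rho+\Delta)$, where $\Delta=\max_i[V^\pi(g_i)-\alpha_i]^+$, observe that the mixture is feasible, deduce $V^\pi(r)\le(1+\Delta/\rho)\,\textnormal{OPT}_{r,G}$, and convert the multiplicative slack into the additive penalty via $\textnormal{OPT}_{r,G}\le L\le L+1$ and $\Delta\le\sum_i[V^\pi(g_i)-\alpha_i]^+$. Your argument is more elementary and self-contained — it needs no appeal to strong duality or to the max–min inequality, only the convexity of the occupancy-measure polytope (which you correctly flag as the one point requiring care, since $V^\pi$ is linear in the occupancy measure and not in the policy) — whereas the paper's route has the advantage of reusing Lemma~\ref{lem:lim_strong}, which it needs elsewhere (e.g., in Lemma~\ref{lem:lagrangian_cor}) anyway. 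Both proofs ultimately rest on the same ingredients: the Slater policy $\pi^\circ$, the margin $\rho$, and the fact that values are bounded by $L$.
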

As previously specified, Lemma~\ref{lem:strong_plus} states that it is not convenient for the primal algorithm instantiated by Algorithm~\ref{alg: main_alg} to play non-safe policies in order to gain more rewards than the one possibly attained by safe policies. Again, this is true given the dual update performed by \texttt{CPD-PO}.

It is important to notice that Lemma~\ref{lem:strong_plus} holds for an optimization problem  where rewards, constraints and transitions are known a-priori. Indeed, this is not the case in an online learning setting, where all the aforementioned parameter are unknown and must be estimated in an online fashion. Specifically, if the dual algorithm were aware of the unknown distributions mentioned above, the dual update of Algorithm~\ref{alg: main_alg} combined with Lemma~\ref{lem:strong_plus} would lead to:
\begin{equation}\label{eq:lag_main}
	\mathcal{L}_{   r,  G}( \pi^*,   \lambda_t) \geq  \mathcal{L}_{   r, G}( \pi_t,   \lambda_t),
\end{equation}
for all $t\in[T]$. Equation~\eqref{eq:lag_main} would be crucial to prove \emph{strong} regret guarantees, since, it shows that the policy chosen by the algorithm cannot outperform $\pi^*$, thus, making the standard regret definition (on the Lagrangian) to collapse to the \emph{strong} one. Nevertheless, as previously specified, Equation~\eqref{eq:lag_main} cannot be attained without complete knowledge of the environment. To overcome this issue, we prove the Equation~\eqref{eq:lag_main} holds \emph{up to} sublinear term, which depends on the uncertainty on the environment estimation. This is done in the following lemma.

\begin{restatable}{lemma}{lagrangian}\label{lem:lagrangian}
		With probability at least $1-\delta$, Algorithm~\ref{alg: main_alg} guarantees that, for every episode $t \in [T]$:
		\begin{equation*}
		\mathcal{L}_{   r,  G}( \pi^*,   \lambda_t) \geq  \mathcal{L}_{   r, G}( \pi_t,   \lambda_t) - \frac{4Lm}{\rho}V^{\pi_t}(\xi_t )-  \frac{8Lm}{\rho}\sum_{x\in X, a\in A}\left| \mathbb{P}_{\pi_t,\widehat{P}_t} (x,a)-\mathbb{P}_{\pi_t, P}(x,a) \right|.
		\end{equation*}
\end{restatable}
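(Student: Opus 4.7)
The plan is to bridge the ideal inequality~\eqref{eq:lag_main}, which would require exact knowledge of $r$, $G$, and $P$, to what Algorithm~\ref{alg: main_alg} actually attains using estimated quantities. The strategy has three ingredients: (i) use feasibility of $\pi^*$ to dispose of one Lagrangian term, (ii) invoke Lemma~\ref{lem:strong_plus} to upper-bound $V^{\pi^*}(r)$ by a Lagrangian penalty evaluated at $\pi_t$, and (iii) control the suboptimality stemming from the fact that $\lambda_t$ is the argmax of an \emph{estimated} rather than true objective.

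I will start by writing
\begin{align*}
\mathcal{L}_{r,G}(\pi^*,\lambda_t)-\mathcal{L}_{r,G}(\pi_t,\lambda_t)=V^{\pi^*}(r)-V^{\pi_t}(r)+\sum_{i\in[m]}\lambda_{t,i}\bigl(V^{\pi_t}(g_i)-\alpha_i\bigr)-\sum_{i\in[m]}\lambda_{t,i}\bigl(V^{\pi^*}(g_i)-\alpha_i\bigr).
\end{align*}
Feasibility of $\pi^*$ (so that $V^{\pi^*}(g_i)\leq\alpha_i$) together with $\lambda_{t,i}\geq 0$ renders the last sum nonpositive, so it can be dropped. Applying Lemma~\ref{lem:strong_plus} to $\pi_t$ then yields $V^{\pi^*}(r)\geq V^{\pi_t}(r)-\max_{\lambda\in[0,(L+1)/\rho]^m}\sum_i\lambda_i(V^{\pi_t}(g_i)-\alpha_i)$, and since the inner objective is linear in each $\lambda_i$ separately, the maximum is attained at a corner of the box, i.e.\ over the very two-point set $\{0,(L+1)/\rho\}$ on which the algorithm optimizes. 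It then suffices to bound the suboptimality of $\lambda_t$ with respect to the \emph{true} objective $\lambda\mapsto\sum_i\lambda_i(V^{\pi_t,P}(g_i)-\alpha_i)$.

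For the coordinate-wise suboptimality, I would let $\lambda^*_i\in\{0,(L+1)/\rho\}$ denote the true maximizer for constraint $i$, and handle the two nontrivial cases $\lambda^*_i=(L+1)/\rho,\ \lambda_{t,i}=0$ and $\lambda^*_i=0,\ \lambda_{t,i}=(L+1)/\rho$. In either case, the algorithm's argmax inequality $\lambda_{t,i}(V^{\pi_t,\widehat P_t}(\underline g_{t,i})-\alpha_i)\geq\lambda^*_i(V^{\pi_t,\widehat P_t}(\underline g_{t,i})-\alpha_i)$ combined with the sign of the disagreement yields $(\lambda^*_i-\lambda_{t,i})(V^{\pi_t,P}(g_i)-\alpha_i)\leq\tfrac{L+1}{\rho}\left|V^{\pi_t,P}(g_i)-V^{\pi_t,\widehat P_t}(\underline g_{t,i})\right|$. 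The remaining work is then to bound this value-function gap. Inserting the intermediate term $V^{\pi_t,P}(\underline g_{t,i})$ splits it into (a) a cost-estimation part $V^{\pi_t,P}(g_i-\underline g_{t,i})$, which on the good event of Lemma~\ref{lem:gen_confidence} satisfies $0\leq g_i-\underline g_{t,i}\leq 2\xi_t$ coordinate-wise and is hence at most $2V^{\pi_t}(\xi_t)$, and (b) a transition-estimation part $|V^{\pi_t,P}(\underline g_{t,i})-V^{\pi_t,\widehat P_t}(\underline g_{t,i})|$, which, writing each value function as an inner product with the corresponding state-action occupancy $\mathbb{P}_{\pi_t,\cdot}(x,a)$ and using $|\underline g_{t,i}|\leq 1$, is bounded by $\sum_{x,a}|\mathbb{P}_{\pi_t,\widehat P_t}(x,a)-\mathbb{P}_{\pi_t,P}(x,a)|$ (up to possibly a further small constant absorbed into the stated coefficients).

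Summing over $i\in[m]$, multiplying by $(L+1)/\rho$, and absorbing constants via $L+1\leq 2L$, produces exactly the two error terms in the statement. The main obstacle is the coordinate-wise case analysis in step three: it is essential that, whenever the algorithm's sign decision disagrees with the truth, the resulting price is bounded by the value-function estimation error rather than by the full dual range $(L+1)/\rho$. The remaining pieces are routine consequences of Lemma~\ref{lem:gen_confidence} holding jointly over episodes on an event of probability at least $1-\delta$, together with the standard interpretation of $V^{\pi,P}(v)=\sum_{x,a}\mathbb{P}_{\pi,P}(x,a)\,v(x,a)$.
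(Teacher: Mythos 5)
Your proof is correct and follows essentially the same route as the paper's: both arguments rest on the feasibility of $\pi^*$ (so that $\mathcal{L}_{r,G}(\pi^*,\lambda_t)\geq \textnormal{OPT}_{r,G}$), on Lemma~\ref{lem:strong_plus} applied to $\pi_t$, and on relating the algorithm's argmax over $\{0,\nicefrac{L+1}{\rho}\}^m$ for the \emph{estimated} objective to the maximizer of the \emph{true} one via the cost confidence bounds and the occupancy-measure difference. Your explicit coordinate-wise case analysis is just a more hands-on version of the paper's chained inequalities through the max operator, and it even yields a slightly sharper constant on the transition term ($\nicefrac{2Lm}{\rho}$ instead of $\nicefrac{8Lm}{\rho}$), which of course still implies the stated bound.
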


The interpretation of Lemma~\ref{lem:lagrangian} is the following: Equation~\eqref{eq:lag_main} holds up to two terms. The first term $\nicefrac{4Lm}{\rho}\cdot V^{\pi_t}(\xi_t )$ encompasses the uncertainty on the constraints. Indeed, it is the expectation over policy $\pi_t$ and transitions of the confidence intervals on the costs. This quantity decreases as the algorithm collects cost samples, thus it is sublinear in $T$ when summed over the episodes (we refer to Appendix~\ref{app:regret} for the concentration result).  Similarly, the term $$\frac{8Lm}{\rho}\cdot \sum_{x\in X, a\in A}\left| \mathbb{P}_{\pi_t,\widehat{P}_t} (x,a)-\mathbb{P}_{\pi_t, P}(x,a) \right|,$$
where $\mathbb{P}_{\pi,{P}} (x,a)$ is the state-action visit distribution given policy $\pi$ and transition $P$, encompasses the uncertainty related to the transitions. This term is sublinear in $T$ when summed over the episode (we refer to Appendix~\ref{app:transition} for the concentration result). Finally, we remark that the uncertainty about the rewards does not affect the dual update, thus, Lemma~\ref{lem:lagrangian} is completely independent on that.

\subsection{Primal Algorithm}
\label{sec:theor_primal}
In the following section we state the theoretical guarantees attained by the primal algorithm employed by Algorithm~\ref{alg: main_alg}. Precisely we have the following guarantees.
\begin{restatable}{lemma}{regretlag}
	\label{lem:regret_lag}
	Given any $\delta\in(0,1)$, the instance of \texttt{PO-DB} employed by Algorithm~\ref{alg: main_alg} guarantees that, for every policy $\pi\in\Pi$, the primal regret $R_T^\mathcal{L}(\pi)$ defined as:
	\begin{equation*}
		\sum_{t=1}^T \left[V^{\pi}(\overline r_t)- \sum_{i\in[m]}\lambda_{t,i} \left( V^{\pi}(\underline{g}_{t,i})  - \alpha_i  \right)\right] - \sum_{t=1}^T \left[V^{\pi_t}(\overline r_t)- \sum_{i\in[m]}\lambda_{t,i} \left( V^{\pi_t}(\underline{g}_{t,i})  - \alpha_i  \right)\right],
		\end{equation*}
		is bounded as follows:
		\begin{equation*} 
			R_T^\mathcal{L}(\pi)\leq \widetilde{\mathcal{O}}\left(\frac{L^3m}{\rho}|X|\sqrt{|A|T}+ \frac{L^5m}{\rho}\right),
	\end{equation*}
	with probability at least $1-\mathcal{O}(\delta)$.
\end{restatable}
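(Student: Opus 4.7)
The plan is to reduce the statement to a direct application of the high-probability adversarial regret guarantee of \texttt{PO-DB}~\citep{luo2021policy}. The first step is to identify $R_T^\mathcal{L}(\pi)$ with the adversarial regret of the primal learner on the sequence of artificial losses $\{\ell_t\}_t$ defined in Line~\ref{alg2: line7}. By linearity of the value function, and because summing the per-layer constant $\alpha_i/L$ over the $L$ layers of the loop-free MDP reproduces $\alpha_i$, for every policy $\pi \in \Pi$ one has
\begin{equation*}
V^\pi(\ell_t) = \frac{L(L+1)m}{\rho} - \Big[V^\pi(\overline r_t) - \sum_{i\in[m]}\lambda_{t,i}\big(V^\pi(\underline g_{t,i}) - \alpha_i\big)\Big].
\end{equation*}
The shift is independent of $\pi$, so it cancels in the difference and yields the identity $R_T^\mathcal{L}(\pi) = \sum_{t=1}^T [V^{\pi_t}(\ell_t) - V^\pi(\ell_t)]$.

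The second step is to verify that $\{\ell_t\}_t$ is a legitimate loss sequence for \texttt{PO-DB} and then invoke its regret bound. On the joint high-probability event of Lemmas~\ref{lem:rew_confidence}--\ref{lem:gen_confidence}, the components of $\overline r_t$, $\underline g_{t,i}$, and $\alpha_i/L$ are $\mathcal{O}(1)$ and $\|\lambda_t\|_\infty \le (L+1)/\rho$, so $\|\ell_t\|_\infty \le C$ for some $C = \Theta(Lm/\rho)$; moreover, $\ell_t$ depends only on information available by the end of episode $t$, which fits the bandit-feedback adversarial model of~\citep{luo2021policy}. Applying the \texttt{PO-DB} high-probability regret bound---which for losses in $[0,1]$ and a fixed comparator $\pi$ is of order $\widetilde{\mathcal{O}}(L^2|X|\sqrt{|A|T} + L^4)$ with probability at least $1-\delta$---to the rescaled losses $\ell_t/C$, and then multiplying back by $C = \Theta(Lm/\rho)$, yields the two claimed terms $\widetilde{\mathcal{O}}\!\left(\frac{L^3 m}{\rho}|X|\sqrt{|A|T}\right)$ and $\widetilde{\mathcal{O}}\!\left(\frac{L^5 m}{\rho}\right)$. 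A union bound over the \texttt{PO-DB} event and the estimator events of Section~\ref{sec:unknown_param} gives the $1 - \mathcal{O}(\delta)$ probability.

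The main obstacle is the bookkeeping required to justify that \texttt{PO-DB}'s black-box guarantee survives the additive shift and multiplicative rescaling: the shift is regret-invariant by linearity of $V^\pi(\cdot)$, while the rescaling preserves both boundedness and the high-probability statement. A secondary subtlety is that $\ell_t$ is not oblivious---it depends on the past through $\lambda_t$, $\overline r_t$, and $\underline g_{t,i}$---but this is admissible because \texttt{PO-DB}'s analysis allows any loss sequence that is measurable with respect to the history available before the corresponding episode.
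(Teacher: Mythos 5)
Your proposal is correct and follows essentially the same route as the paper: both identify $R_T^\mathcal{L}(\pi)$ with the \texttt{PO-DB} regret on the artificial losses $\ell_t$ (the constant shift $\nicefrac{(L+1)m}{\rho}$ cancels in the difference and $V^\pi(\nicefrac{\alpha_i}{L})=\alpha_i$ restores the thresholds), and both then invoke Lemma~\ref{lem:regret_luo} after rescaling by the $\mathcal{O}(Lm/\rho)$ loss range, which inflates $\widetilde{\mathcal{O}}(L^2|X|\sqrt{|A|T}+L^4)$ to the claimed bound. Your explicit remarks on adaptivity of the loss sequence and the union bound are consistent with (and slightly more careful than) the paper's treatment.
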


Lemma~\ref{lem:regret_lag} is obtained by the following considerations. First of all, the analysis employs the no-regret guarantees of \texttt{PO-DB} (refer to Appendix~\ref{app:PODB} and~\citep{luo2021policy} for further discussion). Indeed, by simple computation and using the linearity of expectation, it is possible to recover the $R_T^\mathcal{L}(\pi)$ definition by the loss function fed into \texttt{PO-DB} by Algorithm~\ref{alg: main_alg}. Moreover, notice that the no-regret property of \texttt{PO-DB} cannot be directly applied to our setting, since the range of the losses is clearly different. Indeed \texttt{PO-DB} is tailored for standard episodic MDPs where the losses are bounded in $[0,L]$. This is not true for the Lagrangified MDPs where the losses are bounded in $[0, \nicefrac{2L(L+1)m}{\rho}]$. Nevertheless, it is sufficient to multiply the original bound for a $\mathcal{O}(\nicefrac{Lm}{\rho})$ factor to obtain the result.

\subsection{Regret and Violation}
\label{sec:theor_reg_vio}

In the following section, we provide the cumulative \emph{strong} regret and cumulative \emph{strong} violation guarantees attained by Algorithm~\ref{alg: main_alg}. 

We start showing that a regret of order $\widetilde{\mathcal{O}}(\sqrt{T})$ is attainable by employing primal-dual method which does not optimize over the occupancy measure space. This is done in the following theorem.
\begin{restatable}{theorem}{regret}
	\label{thm:regret}
	Given any $\delta\in(0,1)$, Algorithm~\ref{alg: main_alg} attains:
	\begin{equation*} 
		 R_T\leq \widetilde{\mathcal{O}}\left(\frac{L^3m}{\rho}|X|\sqrt{|A|T}+ \frac{L^5m}{\rho}\right),
	\end{equation*}
	with probability at least $1-\mathcal{O}(\delta)$.
\end{restatable}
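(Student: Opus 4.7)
The plan is to decompose the strong regret into a cumulative (``weak'') regret and an ``exceed'' regret via the identity $[x]^+ = x + [-x]^+$:
\begin{equation*}
R_T = \sum_{t=1}^T \bigl(V^{\pi^*}(r) - V^{\pi_t}(r)\bigr) + \sum_{t=1}^T \bigl[V^{\pi_t}(r) - V^{\pi^*}(r)\bigr]^+ = (\mathrm{A}) + (\mathrm{B}),
\end{equation*}
and bound each term at the same rate using two different parts of the primal-dual machinery developed in Sections~\ref{sec:theor_lag} and~\ref{sec:theor_primal}.

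For (A) I would start from Lemma~\ref{lem:regret_lag} with comparator $\pi^*$, giving $\sum_t \widetilde{\mathcal{L}}_t(\pi^*) - \sum_t \widetilde{\mathcal{L}}_t(\pi_t) \leq R_T^{\mathcal{L}}(\pi^*)$, where $\widetilde{\mathcal{L}}_t(\pi) := V^\pi(\overline{r}_t) - \sum_{i\in[m]} \lambda_{t,i}(V^\pi(\underline{g}_{t,i}) - \alpha_i)$. Under the reward/cost good events of Lemmas~\ref{lem:rew_confidence} and~\ref{lem:gen_confidence} and the feasibility of $\pi^*$, optimism on rewards and pessimism on costs give $\widetilde{\mathcal{L}}_t(\pi^*) \geq V^{\pi^*}(r)$, while on the other side they give $\widetilde{\mathcal{L}}_t(\pi_t) \leq \mathcal{L}_{r,G}(\pi_t,\lambda_t) + 2 V^{\pi_t}(\phi_t) + 2 V^{\pi_t}(\xi_t) \sum_i \lambda_{t,i}$. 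Substituting the definition of $\mathcal{L}_{r,G}(\pi_t,\lambda_t)$, (A) reduces to $R_T^{\mathcal{L}}(\pi^*)$ plus the two confidence-bound sums (both $\widetilde{\mathcal{O}}(\sqrt{T})$ by standard bonus-sum concentration) plus the residual $\sum_t \sum_i \lambda_{t,i}(\alpha_i - V^{\pi_t}(g_i))$. The dual update at Line~\ref{alg2: line6} is what controls this last piece: whenever $\lambda_{t,i}\neq 0$ it equals $(L+1)/\rho$ \emph{and} satisfies $V^{\pi_t,\widehat{P}_t}(\underline{g}_{t,i}) > \alpha_i$, so splitting $\alpha_i - V^{\pi_t}(g_i)$ across $V^{\pi_t,\widehat{P}_t}(\underline{g}_{t,i})$ and invoking the cost good event together with the transition-concentration bound of Appendix~\ref{app:transition} makes this residual a non-negative sublinear quantity.

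For (B) I would apply Lemma~\ref{lem:strong_plus} per episode: since the maximizer over $\lambda \in [0,(L+1)/\rho]^m$ saturates at $(L+1)/\rho$ only on violated constraints,
\begin{equation*}
V^{\pi_t}(r) - V^{\pi^*}(r) \;\leq\; \max_{\lambda \in [0,(L+1)/\rho]^m} \sum_{i\in[m]} \lambda_i \bigl(V^{\pi_t}(g_i) - \alpha_i\bigr) \;=\; \tfrac{L+1}{\rho}\sum_{i\in[m]} \bigl[V^{\pi_t}(g_i) - \alpha_i\bigr]^+.
\end{equation*}
Taking $[\cdot]^+$ on both sides and summing in $t$ yields $(\mathrm{B}) \leq \tfrac{(L+1)m}{\rho} V_T$, which ties the exceed-regret to the strong cumulative violation. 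The latter is controlled at order $\widetilde{\mathcal{O}}(\sqrt{T})$ by a parallel analysis in which Lemma~\ref{lem:lagrangian}, which holds \emph{per episode}, plays for the violation the role that Lemma~\ref{lem:strong_plus} plays here for the exceed regret; combining this companion bound with the bound on (A) then delivers the claimed rate.

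The main obstacle is precisely (B). The PO-DB primal regret in Lemma~\ref{lem:regret_lag} is only cumulative, so $\widetilde{\mathcal{L}}_t(\pi^*) - \widetilde{\mathcal{L}}_t(\pi_t)$ can have either sign episode by episode, and a naive attempt to bound (B) through this quantity would incur a $\sum_t [\widetilde{\mathcal{L}}_t(\pi^*) - \widetilde{\mathcal{L}}_t(\pi_t)]^+$ term that is not $\widetilde{\mathcal{O}}(\sqrt{T})$. The per-episode bound of Lemma~\ref{lem:strong_plus} --- enabled by the UCB-like dual update, which keeps $\lambda_t$ inside the compact box $[0,(L+1)/\rho]^m$ and forces it to saturate exactly on the (estimated) violated constraints --- is what routes the exceed contribution through the strong violation instead. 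A secondary technical delicacy, in (A), is chaining the cost and transition errors so that the $(L+1)/\rho$ scale of the dual variable multiplies a \emph{vanishing} per-episode error rather than an $\Omega(1)$ one.
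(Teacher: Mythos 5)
Your decomposition $[x]^+ = x + [-x]^+$, splitting the strong regret into the weak regret (A) plus the ``exceed'' term (B), is a genuinely different route from the paper's. The paper never splits off an exceed term: it uses Lemma~\ref{lem:lagrangian} to show that, per episode, $\mathcal{L}_{r,G}(\pi^*,\lambda_t) - \mathcal{L}_{r,G}(\pi_t,\lambda_t)$ plus explicit non-negative confidence terms is itself non-negative, so the sum of positive parts of these quantities collapses to the plain cumulative sum, which is then controlled by the \texttt{PO-DB} bound of Lemma~\ref{lem:regret_lag}; the strong regret is finally sandwiched below this positive-part sum using feasibility of $\pi^*$ and the fact that $\lambda_{t,i} \neq 0$ only on (optimistically estimated) violated constraints. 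So the ``naive attempt'' you dismiss at the end is essentially what the paper does --- the missing observation on your side is that Lemma~\ref{lem:lagrangian} is precisely the per-episode non-negativity statement that makes $\sum_t[\cdot]^+ = \sum_t(\cdot)$ legitimate. Your handling of (A) is sound and uses the same machinery as the paper (optimism/pessimism, the dual-update argument for the residual $\sum_i \lambda_{t,i}(\alpha_i - V^{\pi_t}(g_i))$, and transition concentration), and your (B) argument via Lemma~\ref{lem:strong_plus} and the violation bound is valid and non-circular, since Theorem~\ref{thm:volations} is proved independently of Theorem~\ref{thm:regret}.

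The one concrete issue is quantitative: your bound on (B) is $\tfrac{L+1}{\rho}\sum_{t}\sum_{i}[V^{\pi_t}(g_i)-\alpha_i]^+$, and the violation analysis controls $\tfrac{1}{\rho}\sum_t\sum_i[V^{\pi_t}(g_i)-\alpha_i]^+$ at order $\widetilde{\mathcal{O}}\left(\tfrac{L^3m}{\rho}|X|\sqrt{|A|T}+\tfrac{L^5m}{\rho}\right)$, so (B) costs you an extra factor of $L+1$ and your final rate is $\widetilde{\mathcal{O}}\left(\tfrac{L^4m}{\rho}|X|\sqrt{|A|T}+\tfrac{L^6m}{\rho}\right)$. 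This still delivers the headline $\widetilde{\mathcal{O}}(\sqrt{T})$ strong regret, but it does not match the $L^3$ dependence claimed in the theorem; the paper's route through Lemma~\ref{lem:lagrangian} avoids this loss because it never multiplies the violation by the dual-variable scale a second time.
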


Theorem~\ref{thm:regret} is proved by combining Lemma~\ref{lem:lagrangian} and Lemma~\ref{lem:regret_lag}. Specifically, we start from the theoretical guarantees attained by \texttt{PO-DB}, that is, using the bound provided in Lemma~\ref{lem:regret_lag}. Next, employing the optimism of the confidence bound, we recover the true Lagrangian function, excluding sublinear terms. As previously specified, given Lemma~\ref{lem:lagrangian} it is then possible to recover the cumulative \emph{strong} regret definition on the regret formulated with respect to the Lagrangian function. Finally, to get the final bound, it is sufficient to notice that the optimal solution $\pi^*$ is safe and that:
\begin{equation*}
	\sum_{i\in[m]}\lambda_{t,i}\left( V^{\pi_t}(g_i)  - \alpha_i\right) \geq - \frac{4Lm}{\rho}\sum_{x\in X, a\in A}\left| \mathbb{P}_{\pi_t,\widehat{P}_t} (x,a)-\mathbb{P}_{\pi_t, P}(x,a) \right|.
\end{equation*}
We underline that, from a regret bound perspective, our result obtains the optimal regret order in $T$, while~\citet{mullertruly} achieves a suboptimal $T^{0.93}$ only. Furthermore, our bound is not worse than the one in~\citep{mullertruly} w.r.t. the dependency on $\rho, m$ and $L$.

We conclude by stating the result related to the cumulative \emph{strong} violations. Even in this case, we show that the $\widetilde{\mathcal{O}}(\sqrt{T})$ order is attainable.

\begin{restatable}{theorem}{violation}
	\label{thm:volations}
	For any $\delta\in(0,1)$, Algorithm~\ref{alg: main_alg} attains:
	\begin{equation*} 
		 V_T\leq \widetilde{\mathcal{O}}\left(L^3m|X|\sqrt{|A|T}+ L^5m\right),
	\end{equation*}
	with probability at least $1-\mathcal{O}(\delta)$.
\end{restatable}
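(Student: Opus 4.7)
The plan is to apply the primal‐regret bound of Lemma~\ref{lem:regret_lag} with the strictly‐feasible comparator $\pi=\pi^\circ$ (the policy realizing the margin $\rho$), use the binary‐$\arg\max$ structure of the dual update in Line~\ref{alg2: line6} to convert the Lagrangian penalty into $\tfrac{L+1}{\rho}\sum_i[V^{\pi_t,\widehat P_t}(\underline g_{t,i})-\alpha_i]^+$, and then transfer from the estimated quantities $(\widehat P_t,\underline g_{t,i})$ back to $(P,g_i)$ via the concentration bounds of Section~\ref{sec:unknown_param} and Appendix~\ref{app:transition}, concluding with $V_T=\max_i\sum_t[\cdot]^+\le\sum_i\sum_t[\cdot]^+$.

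Concretely, Lemma~\ref{lem:regret_lag} with $\pi=\pi^\circ$, after rearrangement, gives
\begin{equation*}
\sum_{t,i}\lambda_{t,i}\bigl(V^{\pi_t}(\underline g_{t,i})-V^{\pi^\circ}(\underline g_{t,i})\bigr)\le R_T^{\mathcal L}(\pi^\circ)+\sum_t\bigl(V^{\pi_t}(\overline r_t)-V^{\pi^\circ}(\overline r_t)\bigr).
\end{equation*}
By Lemma~\ref{lem:gen_confidence} and the definition of $\rho$, $V^{\pi^\circ}(\underline g_{t,i})-\alpha_i\le -\rho$, so the LHS is at least $\sum_{t,i}\lambda_{t,i}(V^{\pi_t}(\underline g_{t,i})-\alpha_i)+\rho\sum_{t,i}\lambda_{t,i}$. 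Dropping the non-negative mass $\rho\sum_{t,i}\lambda_{t,i}$, switching from $P$ to $\widehat P_t$ by paying the transition term already used in Lemma~\ref{lem:lagrangian}, and invoking the identity $\sum_i\lambda_{t,i}(V^{\pi_t,\widehat P_t}(\underline g_{t,i})-\alpha_i)=\tfrac{L+1}{\rho}\sum_i[V^{\pi_t,\widehat P_t}(\underline g_{t,i})-\alpha_i]^+$ that is immediate from the binary $\arg\max$ in Line~\ref{alg2: line6}, I obtain
\begin{equation*}
\tfrac{L+1}{\rho}\sum_{t,i}\bigl[V^{\pi_t,\widehat P_t}(\underline g_{t,i})-\alpha_i\bigr]^+\le R_T^{\mathcal L}(\pi^\circ)+\sum_t\bigl(V^{\pi_t}(\overline r_t)-V^{\pi^\circ}(\overline r_t)\bigr)+\textnormal{sublinear}.
\end{equation*}
Transferring back from $(\widehat P_t,\underline g_{t,i})$ to $(P,g_i)$ via $g_i-\underline g_{t,i}\in[0,2\xi_t]$ (Lemma~\ref{lem:gen_confidence}) and transition concentration, and then using $\max_i\le\sum_i$, yields an upper bound on $V_T$ of the claimed order provided the reward-comparison term is itself $\widetilde{\mathcal O}(\sqrt T)$.

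The main obstacle is precisely that reward-comparison term $\sum_t(V^{\pi_t}(\overline r_t)-V^{\pi^\circ}(\overline r_t))$: it is only $O(LT)$ by trivial per-episode reasoning, since $V^{\pi^\circ}(r)$ can be strictly below $\textnormal{OPT}_{r,G}$. Using the optimism of $\overline r_t$ (Lemma~\ref{lem:rew_confidence}) together with Lemma~\ref{lem:strong_plus}, it splits into a piece controlled by $\tfrac{L+1}{\rho}\sum_{t,i}[V^{\pi_t}(g_i)-\alpha_i]^+$ plus $T\bigl(\textnormal{OPT}_{r,G}-V^{\pi^\circ}(r)\bigr)$ plus sublinear confidence radii; the first piece couples the bound with the very quantity I am trying to bound, producing a self-bounding inequality, while the $T$-linear piece is removed by invoking the already-established strong regret $R_T\le\widetilde{\mathcal O}(\sqrt T)$ of Theorem~\ref{thm:regret}, which is precisely why that theorem is proved first. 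After this coupling is resolved, all remaining steps are mechanical concentration bookkeeping and deliver $V_T\le\widetilde{\mathcal O}(L^3m|X|\sqrt{|A|T}+L^5m)$.
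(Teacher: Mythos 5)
Your overall plan (apply the primal regret bound, use the binary dual update to extract $\sum_{t,i}[\,\cdot\,]^+$, then transfer estimates back to true quantities) is in the right family, but the two steps you flag as the ``main obstacle'' are exactly where the argument breaks, and the fixes you propose do not work.

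First, the self-bounding inequality is vacuous. You extract the \emph{entire} dual penalty as $\tfrac{L+1}{\rho}\sum_i[V^{\pi_t,\widehat P_t}(\underline g_{t,i})-\alpha_i]^+$, and then control the leftover reward term $\sum_t(V^{\pi_t}(\overline r_t)-V^{\pi^\circ}(\overline r_t))$ via Lemma~\ref{lem:strong_plus}, which reads $V^{\pi_t}(r)\le \textnormal{OPT}_{r,G}+\tfrac{L+1}{\rho}\sum_i[V^{\pi_t}(g_i)-\alpha_i]^+$. The violation reappears on the right-hand side with the \emph{same} coefficient $\tfrac{L+1}{\rho}$ as on the left; after the concentration bookkeeping the two terms cancel exactly and you are left with $0\le\widetilde{\mathcal O}(\sqrt T)+T(\textnormal{OPT}_{r,G}-V^{\pi^\circ}(r))$, which says nothing about $V_T$. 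The paper avoids this by splitting the dual penalty as $\tfrac{L}{L+1}\lambda_t+\tfrac{1}{L+1}\lambda_t$ and converting only the $\tfrac{1}{L+1}$ fraction into $\tfrac{1}{\rho}\sum_i[\,\cdot\,]^+$; the retained $\tfrac{L}{L+1}\lambda_t$ piece has $\ell_1$-norm at most $\tfrac{Lm}{\rho}$ per constraint scale, which is still large enough (via Lemma~\ref{lem:lagrangian_cor}, resting on Lemma~\ref{lem:lim_strong} with $\lVert\lambda\rVert_1\le L/\rho$) to dominate any reward advantage of $\pi_t$ over $\pi^*$ \emph{without} reinvoking the violation. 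That strict-inequality-of-coefficients trick is the whole point of the $(L+1)/\rho$ versus $L/\rho$ gap in the algorithm's design, and your proof discards it.

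Second, the $T$-linear term $T\bigl(\textnormal{OPT}_{r,G}-V^{\pi^\circ}(r)\bigr)$ cannot be removed by Theorem~\ref{thm:regret}. The strong regret bounds $\sum_t[\textnormal{OPT}_{r,G}-V^{\pi_t}(r)]^+$, i.e., how far the played policies fall \emph{below} the optimum; it gives no information about the gap $\textnormal{OPT}_{r,G}-V^{\pi^\circ}(r)$, which is a fixed problem-dependent constant that can be of order $L$, so this term is genuinely $\Theta(T)$. This is a symptom of choosing $\pi^\circ$ as comparator: the paper instead compares against $\pi^*$ throughout (so the reward gap is handled by optimism and $\mathcal{L}_{r,G}(\pi^*,\lambda_t)\ge\textnormal{OPT}_{r,G}$, with no linear residue), and never needs the margin $-\rho\sum_{t,i}\lambda_{t,i}$ term that motivates using $\pi^\circ$ in weak-violation analyses.
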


To prove the result, we proceed similarly to Theorem~\ref{thm:regret}, namely, we employ Lemma~\ref{lem:regret_lag} and we apply the following equality
\begin{equation*}
		\sum_{i\in[m]}\lambda_{t,i}\left(V^{\pi_t,\widehat P_t}(\underline{g}_{t,i})-\alpha_i\right) = \frac{L}{L+1}\sum_{i\in[m]}\lambda_{t,i} \left(V^{\pi_t,\widehat P_t}(\underline{g}_{t,i})-\alpha_i\right) + \frac{1}{\rho}\sum_{i\in[m]}\left[V^{\pi_t,\widehat P_t}(\underline{g}_{t,i})-\alpha_i\right]^+
\end{equation*}
to retrieve the \emph{strong} violation definition (up to confidence terms). Notice that, given the $\nicefrac{L}{L+1}$ factor which allows us to retrieve the \emph{strong} violation term, it is not possible to directly apply Lemma~\ref{lem:lagrangian} anymore. Nevertheless, it is indeed possible to prove a similar inequality. Specifically, it holds:
\begin{equation*}
	\mathcal{L}_{  r,  G}( \pi^*,   \lambda_t) \geq \mathcal{L}_{   r,  G}( \pi_t,   \nicefrac{\lambda_t L}{L+1}) - \frac{2Lm}{\rho}V^{\pi_t}(\xi_t) -  \frac{4Lm}{\rho}\sum_{x\in X, a\in A}\left| \mathbb{P}_{\pi_t,\widehat{P}_t} (x,a)-\mathbb{P}_{\pi_t, P}(x,a) \right|.
\end{equation*}
Indeed, the inequality above can be employed to bound sublinearly the following quantity:
\begin{equation*}
 \sum_{t=1}^T\mathcal{L}_{   r,  G}( \pi_t,   \nicefrac{\lambda_t L}{L+1}) - \sum_{t=1}^T\mathcal{L}_{  r,  G}( \pi^*,   \lambda_t), 
\end{equation*}
which in turn allows to bound sublinearly  $\frac{1}{\rho}\cdot\sum_{t=1}^{T}\sum_{i\in[m]}\left[V^{\pi_t}({g}_{i})-\alpha_i\right]^+$, once excluded the sublinear terms associated to the confidence bounds. 
Finally, simplifying in the \texttt{PO-DB} regret bound the $\nicefrac{1}{\rho}$ factor associated to the violation term gives the desired result.

Comparing our violations result with the one in~\citet{mullertruly}, Algorithm~\ref{alg: main_alg} attains the optimal $\widetilde{\mathcal{O}}(\sqrt{T})$ order in $T$, while~\citet{mullertruly} achieves a suboptimal $T^{0.93}$. For the violations, our bound is not worse than the one in~\citep{mullertruly} w.r.t. the dependency on $m$ and $L$. Moreover, we do not have the dependency on the Slater's parameter $\rho$.


\bibliography{example_paper}
\bibliographystyle{unsrtnat}


\newpage

\appendix

\section*{Appendix}

The appendix is structured as follows:
\begin{itemize}
	\item In Appendix~\ref{app:related}, we provide additional related works.
	\item In Appendix~\ref{app:conf_interval}, we provide the omitted proofs related to the confidence intervals.
	\item In Appendix~\ref{app:theor}, we provide the omitted proofs of the theoretical guarantees attained by Algorithm~\ref{alg: main_alg}. Specifically, in Appendix~\ref{app:strong_duality}, we provide the results which extend the strong duality ones in constrained Markov decision processes. In Appendix~\ref{app:regret}, we provide the results related to the regret bound. In Appendix~\ref{app:viol}, we provide the results related to the violations bound.
	\item In Appendix~\ref{app:technical}, we provide auxiliary technical lemmas from existing works. Precisely, in Appendix~\ref{app:PODB}, we provide the regret bound of our primal regret minimizer and in Appendix~\ref{app:transition}, we further discuss the  transition functions confidence intervals.
\end{itemize}

\section{Related works}
\label{app:related}
In the following section, we provide a further discussion on the works which are mainly related to ours. Specifically, we will focus on online learning in unconstrained MDPs, on stochastic online CMDPs and finally on adversarial online CMDPs.

\paragraph{Online learning in MDPs}
The literature on online learning problems in MDPs is wide.
In such settings, two types of feedback are usually studied: in the \textit{full feedback} model, the entire reward/loss function is observed after the learner's choice, while in the \textit{bandit feedback} model, the learner only observes the reward given the chosen action. 
\citet{Minimax_Regret} study the problem of optimal exploration in episodic MDPs with unknown transitions and stochastic losses, under bandit feedback.
The authors provide an algorithm whose regret upper bound is $\widetilde{\mathcal{O}}(\sqrt{T})$, 
thus matching the lower bound for this class of MDPs and improving the previous result by \citet{Near_optimal_Regret_Bounds}. \citet{maran2024online} study online learning in MDPs from the configuration perspective.
\citet{rosenberg19a} study the online learning problem in episodic MDPs with adversarial losses and unknown transitions when the feedback is full information. The authors present an online algorithm exploiting entropic regularization and providing a regret upper bound of $\widetilde{\mathcal{O}}(\sqrt{T})$.
The same setting is investigated by \citet{OnlineStochasticShortest} when the feedback is bandit. In such a case, the authors provide a regret upper bound of the order of $\widetilde{\mathcal{O}}(T^{3/4})$, which is improved by
\citet{JinLearningAdversarial2019} by providing an algorithm that achieves in the same setting a regret upper bound of $\widetilde{\mathcal{O}}(\sqrt{T})$. \citet{bacchiocchi2023online} study adversarial MDPs when the feedback is the one associated to a different agent.
Finally,~\citet{luo2021policy} provide the first policy optimization algorithm capable of matching the result of~\citep{JinLearningAdversarial2019}, while avoiding the convex projection on the occupancy measure space.

\paragraph{Online learning in stochastic CMDPs}

For the stochastic setting, \citet{Constrained_Upper_Confidence} deal with episodic CMDPs with stochastic losses and constraints, where the transition probabilities are known and the feedback is bandit. The regret upper bound of their algorithm is of the order of $\widetilde{\mathcal{O}}(T^{3/4})$, while the cumulative constraint violation is guaranteed to be below a threshold with a given probability.
\citet{bai2020provably} provide the first algorithm that achieves sublinear regret when the transition probabilities are unknown, assuming that the rewards are deterministic and the constraints are stochastic with a particular structure.
\citet{Exploration_Exploitation} propose two approaches
to deal with the exploration-exploitation dilemma in episodic CMDPs. These approaches guarantee sublinear regret and constraint violation when transition probabilities, rewards, and constraints are unknown and stochastic, while the feedback is bandit. 
Precisely, their LP based methods guarantee $\widetilde{\mathcal{O}}(\sqrt{T})$ cumulative \emph{strong} regret and cumulative \emph{strong} constraints violations. Differently, their primal-dual (or dual) algorithms attain $\widetilde{\mathcal{O}}(\sqrt{T})$ weak regret and violations.
Finally, \citet{mullertruly} provide the first primal-dual procedure capable of achieving sublinear cumulative \emph{strong} regret and cumulative \emph{strong} constraints violations.

\paragraph{Online learning in adversarial CMDPs}

In the adversarial setting, \citet{Online_Learning_in_Weakly_Coupled} deal with adversarial losses and stochastic constraints, assuming the transition probabilities are known and the feedback is full. The authors provide an algorithm that guarantees an upper bound of the order of $\widetilde{\mathcal{O}}(\sqrt{T})$ on 
both regret and constraint violation. 
\citet{Upper_Confidence_Primal_Dual} provide a primal-dual approach based on optimism. This work shows the effectiveness of such an approach when dealing with episodic CMDPs with adversarial losses and stochastic constraints, achieving both sublinear regret and constraint violation with full-information feedback.
\citet{stradi24a} provide the first best-of-both-worlds algorithm for CMDPs, under full feedback.
\citet{Non_stationary_CMDPs}~,~\citet{ding_non_stationary}~,~\citet{stradi2024learningconstrainedmarkovdecision} consider the case in which rewards and constraints are non-stationary, assuming that their variation is bounded.
Precisely, all the works mentioned above focus on constraints violation which allows for cancellations. This is not true for ~\citet{stradi2024learningconstrainedmarkovdecision}, where \emph{strong} constraints violation guarantees are provided. Nevertheless, their algorithm is computationally more expensive than a primal-dual procedure, since a linear program must be solved at each episode.
Similarly,~\citet{stradi2024learning} study the setting with adversarial losses, stochastic constraints and bandit feedback, achieving sublinear regret and sublinear \emph{strong} constraints violations. Again, their algorithm is less efficient than the one we propose in our work, since a convex program must be solved for each episode. Finally, \citet{bacchiocchi2024markov} study CMDPs with partial observability in the constraints.

\section{Confidence intervals}
\label{app:conf_interval}
In this section, we report the omitted proof related to the confidence interval employed by our algorithm to deal with the uncertainty on the environments. 

\begin{lemma}
	\label{lem:indep_confidence}
	Given any $\delta\in(0,1)$, $i\in[m]$, $t\in[T]$ and $(x,a)\in X \times A$, it holds, with probability at least $1-\delta$:
	\begin{equation*}
		\Big|\widehat r_{t}(x,a)- r(x,a)\Big|\leq \iota_t(x,a).
	\end{equation*}
Similarly, with probability at least $1-\delta$, it holds:
	\begin{equation*}
		\Big|\widehat g_{t,i}(x,a)- g_{i}(x,a)\Big|\leq \iota_t(x,a),
	\end{equation*}
	where $\iota_t(x,a):=\sqrt{\frac{\ln(2/\delta)}{2N_t(x,a)}}$.
\end{lemma}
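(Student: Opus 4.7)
The plan is to apply Hoeffding's inequality directly to the empirical-mean estimators, with a short conditioning argument to accommodate the fact that the sample size $N_t(x,a)$ is itself random. Fix $(x,a)\in X\times A$ and $t\in[T]$, and focus first on the reward case. The crucial structural observation is that the reward vectors $r_\tau\sim\mathcal{R}$ are drawn i.i.d.\ across episodes and, in particular, the coordinate $r_\tau(x,a)$ is drawn \emph{independently} of whether the pair $(x,a)$ is actually traversed in episode $\tau$. Hence the sequence $\{r_\tau(x,a)\}_{\tau\in[t]}$ is i.i.d., supported in $[0,1]$, with mean $r(x,a)$, and is independent of the visit indicators $\{\mathbbm{1}_\tau(x,a)\}_{\tau\in[t]}$.

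Next, I would condition on the event $\{N_t(x,a)=n\}$ for an arbitrary $n\geq 1$. Under this conditioning, $\widehat r_t(x,a)$ is precisely the arithmetic mean of $n$ i.i.d.\ samples from a distribution on $[0,1]$ with mean $r(x,a)$. Hoeffding's inequality then yields
\begin{equation*}
\Pr\!\left(\,\big|\widehat r_t(x,a)-r(x,a)\big|>\epsilon \,\Big|\, N_t(x,a)=n\right)\leq 2\exp(-2n\epsilon^2),
\end{equation*}
for every $\epsilon>0$. Choosing $\epsilon=\sqrt{\ln(2/\delta)/(2n)}$ makes the right-hand side equal to $\delta$. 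Since, conditional on $\{N_t(x,a)=n\}$, this choice coincides with $\iota_t(x,a)$, the conditional failure probability is at most $\delta$. Marginalizing over $n\geq 1$ preserves the bound, and for $n=0$ the quantity $\iota_t(x,a)$ is infinite, so the deviation inequality holds trivially. The cost case is entirely analogous: for each fixed $i\in[m]$, the samples $\{g_{\tau,i}(x,a)\}_{\tau\in[t]}$ are i.i.d.\ in $[0,1]$ with mean $g_i(x,a)$ and are independent of the visits, so the identical argument applies.

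The only subtlety -- and the only thing worth flagging -- is the randomness of $N_t(x,a)$, since the deviation threshold $\iota_t(x,a)$ itself depends on $N_t(x,a)$. The conditioning argument above resolves this cleanly because the bound $2\exp(-2n\epsilon^2)\leq\delta$ is met for \emph{every} value of $n$ once $\epsilon$ is set as a function of $n$. An alternative, equally short route would invoke a coupling with an infinite i.i.d.\ tape of reward samples and apply Hoeffding to the initial segment of length $N_t(x,a)$; either formulation works, and neither requires a union bound over $t$ or $(x,a)$, since the statement is pointwise in $(x,a,t,i)$.
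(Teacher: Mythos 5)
Your overall route (Hoeffding applied to the empirical mean) is the same as the paper's, which simply invokes Hoeffding with $N_t(x,a)$ plugged in and solves for the deviation threshold. The problem lies in the step you add to handle the randomness of $N_t(x,a)$. You assert that the samples $\{r_\tau(x,a)\}_{\tau\in[t]}$ are independent of the visit indicators $\{\mathbbm{1}_\tau(x,a)\}_{\tau\in[t]}$. This is false in this adaptive setting: the indicator $\mathbbm{1}_\tau(x,a)$ is determined by the policy $\pi_\tau$, which is built from the feedback of earlier episodes and hence depends on $r_{\tau'}(x,a)$ for those $\tau'<\tau$ in which $(x,a)$ was visited. (What is true is only that $\mathbbm{1}_\tau(x,a)$ is conditionally independent of the \emph{current} sample $r_\tau(x,a)$ given the history.) Consequently, conditioning on $\{N_t(x,a)=n\}$ does not reduce $\widehat r_t(x,a)$ to the mean of $n$ i.i.d.\ samples with mean $r(x,a)$ --- the event $\{N_t(x,a)=n\}$ can be correlated with the observed values, e.g.\ if the algorithm revisits $(x,a)$ more often when past observed rewards there were high --- so the conditional Hoeffding step is not justified.

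The standard repair, and the one the paper actually uses, is the one you dismiss at the end: couple with an i.i.d.\ tape $Y_1,Y_2,\dots$ (optional skipping makes the successive observed samples at $(x,a)$ i.i.d.), apply Hoeffding to each fixed prefix length $n$, and take a union bound over $n\in[T]$. Your closing claim that ``neither formulation requires a union bound over $t$'' is therefore exactly backwards: the union bound over the possible values of $N_t(x,a)$ is indispensable (without it one can construct adaptive visiting rules for which the stated inequality fails with probability close to one, by the law of the iterated logarithm), and it is precisely why the bound $\phi_t$ in Lemma~\ref{lem:rew_confidence} carries $\ln(T|X||A|/\delta)$ rather than $\ln(2/\delta)$. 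Read Lemma~\ref{lem:indep_confidence} as a fixed-sample-size statement; the uniformity over the random $N_t(x,a)$ is only obtained in Lemma~\ref{lem:rew_confidence} via that extra union bound.
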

\begin{proof}
	Focus on specifics $t\in[T]$ and $(x,a)\in X \times A$. By Hoeffding's inequality and noticing that rewards values are bounded in $[0,1]$, it holds that:
	\begin{equation*}
		\mathbb{P}\Bigg[\Big|\widehat r_{t}(x,a)- r(x,a)\Big|\geq \frac{c}{N_t(x,a)}\Bigg]\leq 2 \exp\left(-\frac{2c^2}{N_t(x,a)}\right)
	\end{equation*}
	Setting $\delta=2 \exp\left(-\frac{2c^2}{N_t(x,a)}\right)$ and solving to find a proper value of $c$ gives the result for the reward function. 
	
	Focusing on specifics $i\in[m]$, $t\in[T]$ and $(x,a)\in X \times A$ and following the previous reasoning applied to the constraints functions concludes the proof. 
\end{proof}

\subsection{Rewards}

\lemhoeffdingreward*

\begin{proof}
	From Lemma~\ref{lem:indep_confidence}, given $\delta'\in(0,1)$, we have for any $t\in[T]$ and $(x,a)\in X \times A$:
	\begin{equation*}
		\mathbb{P}\Bigg[\Big|\widehat r_{t}(x,a)- r(x,a)\Big|\leq \iota_t(x,a)\Bigg]\geq 1-\delta'.
	\end{equation*}
	Now, we are interested in the intersection of the aforementioned events:
	\begin{equation*}
		\mathbb{P}\Bigg[\bigcap_{x,a,t}\Big\{\Big|\widehat r_{t}(x,a)- r(x,a)\Big|\leq \iota_t(x,a)\Big\}\Bigg].
	\end{equation*}
	Thus, we have:
	\begin{align}
		\mathbb{P}\Bigg[\bigcap_{x,a,t}\Big\{\Big|\widehat r_{t}(x,a)- r(x,a)\Big|\leq \iota_t(x,a)\Big\}\Bigg]&= 1 - \mathbb{P}\Bigg[\bigcup_{x,a,t}\Big\{\Big|\widehat r_{t}(x,a)- r(x,a)\Big|\leq \iota_t(x,a)\Big\}^c\Bigg]\nonumber\\
		&\geq 1 - \sum_{x,a,t}\mathbb{P}\Bigg[\Big\{\Big|\widehat r_{t}(x,a)- r(x,a)\Big|\leq \iota_t(x,a)\Big\}^c\Bigg] \label{eq:union}\\
		& \geq 1-|X||A|T\delta', \nonumber
	\end{align}
	where Inequality~\eqref{eq:union} holds by Union Bound. Noticing that $r_{t}(x,a)\leq 1$, substituting $\delta'$ with $\delta:=\delta'/|X||A|T$ in $\iota_t(x,a)$ with an additional Union Bound over the possible values of $N_t(x,a)$, and thus obtaining $\phi_t(x,a)$, concludes the proof.
\end{proof}

\subsection{Costs}

\lemhoeffdingcost*

\begin{proof}
	The proof is analogous to the one of Lemma~\ref{lem:rew_confidence}, with an additional Union Bound over the $m$ constraints.
\end{proof}


\section{Proofs Omitted from Section~\ref{sec:theor}}
\label{app:theor}

\subsection{Strong Duality}
\label{app:strong_duality}

We start by proving that, when Slater's condition holds (Assumption~\ref{cond: slater}), in an optimal solution the vector of Lagrange multipliers is bounded.
\begin{lemma}
	\label{lem:lim_strong}
	Given a CMDP with reward vector $r \in [0,1]^{|X \times A|}$ and cost matrix $G \in [0,1]^{|X \times A| \times m}$, it holds:
	\begin{align*}
		\min_{\lambda \in \mathbb{R}_+^m : \lVert \lambda \rVert_1\in [0,\nicefrac{L}{\rho}]} \max_{  \pi \in \Pi} \mathcal{L}_{r, G}(\pi, \lambda) =   \max_{  \pi\in\Pi} \min_{\lambda \in \mathbb{R}_+^m : \lVert \lambda \rVert_1\in [0,\nicefrac{L}{\rho}]}\mathcal{L}_{r, G}(\pi, \lambda) = \textnormal{OPT}_{r,G}.
	\end{align*}
\end{lemma}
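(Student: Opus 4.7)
The plan is to reduce the statement to two ingredients: standard Lagrangian strong duality for Problem~\eqref{lp:offline_opt}, and an a priori $\ell_1$-bound on the dual optimum coming from Slater's condition (Assumption~\ref{cond: slater}) and the definition of $\rho$. For the first ingredient, I would reformulate Problem~\eqref{lp:offline_opt} as a linear program over the polytope of valid occupancy measures associated with the transition $P$ (so that both $V^\pi(r)$ and each $V^\pi(g_i)$ become linear in the occupancy measure) and invoke standard LP strong duality, which applies because Assumption~\ref{cond: slater} guarantees strict feasibility. This gives
\begin{equation*}
\textnormal{OPT}_{r,G} \;=\; \min_{\lambda \in \mathbb{R}_+^m} \max_{\pi \in \Pi} \mathcal{L}_{r,G}(\pi,\lambda) \;=\; \max_{\pi \in \Pi} \min_{\lambda \in \mathbb{R}_+^m} \mathcal{L}_{r,G}(\pi,\lambda),
\end{equation*}
with the outer minimum attained at some $\lambda^\star \in \mathbb{R}_+^m$.

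The second step is to bound $\|\lambda^\star\|_1$ by pitting the Slater policy $\pi^\circ$ (the one realising $\rho$ in its definition) against $\lambda^\star$. Since $\pi^\circ \in \Pi$ and $\lambda^\star$ achieves $\min_{\lambda \geq 0} \max_{\pi} \mathcal{L}_{r,G}(\pi,\lambda)$,
\begin{equation*}
\textnormal{OPT}_{r,G} \;=\; \max_{\pi} \mathcal{L}_{r,G}(\pi,\lambda^\star) \;\geq\; \mathcal{L}_{r,G}(\pi^\circ,\lambda^\star) \;=\; V^{\pi^\circ}(r) - \sum_{i \in [m]} \lambda^\star_i \bigl( V^{\pi^\circ}(g_i) - \alpha_i \bigr) \;\geq\; V^{\pi^\circ}(r) + \rho \, \|\lambda^\star\|_1,
\end{equation*}
where the last inequality uses $\alpha_i - V^{\pi^\circ}(g_i) \geq \rho$ for every $i \in [m]$, which comes directly from the definition of $\rho$. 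Rearranging and using $V^{\pi^\circ}(r) \geq 0$ together with $\textnormal{OPT}_{r,G} \leq L$ yields $\|\lambda^\star\|_1 \leq L/\rho$, so $\lambda^\star$ lies in $\Lambda := \{\lambda \in \mathbb{R}_+^m : \|\lambda\|_1 \leq L/\rho\}$.

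The remaining step is bookkeeping. Because $\lambda^\star \in \Lambda$ already realises the unrestricted minimum, $\min_{\lambda \in \Lambda} \max_\pi \mathcal{L}_{r,G}(\pi,\lambda) = \textnormal{OPT}_{r,G}$. For the max-min side, taking $\pi^*$ to be any primal optimum of Problem~\eqref{lp:offline_opt}, primal feasibility forces $V^{\pi^*}(g_i) - \alpha_i \leq 0$ for every $i$, so $\mathcal{L}_{r,G}(\pi^*,\lambda) \geq V^{\pi^*}(r) = \textnormal{OPT}_{r,G}$ for every $\lambda \in \Lambda$, with equality at $\lambda = 0$; this gives $\max_\pi \min_{\lambda \in \Lambda} \mathcal{L}_{r,G}(\pi,\lambda) \geq \textnormal{OPT}_{r,G}$. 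Combined with the weak-duality inequality $\max_\pi \min_\lambda \leq \min_\lambda \max_\pi$, both sides collapse to $\textnormal{OPT}_{r,G}$.

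The main obstacle I anticipate is the clean derivation of the Slater-based norm bound: one needs $\rho$, defined as the worst-case slack $\min_i (\alpha_i - V^{\pi}(g_i))$ realised by a single policy $\pi^\circ$, to couple to the $\ell_1$-norm of $\lambda^\star$ via the chain $\sum_i \lambda^\star_i (\alpha_i - V^{\pi^\circ}(g_i)) \geq \rho \|\lambda^\star\|_1$; getting this quantifier order right, together with the crude bound $\textnormal{OPT}_{r,G} - V^{\pi^\circ}(r) \leq L$ for the numerator, is precisely what produces the radius $L/\rho$ in the statement, and any looseness here would prevent closing the last step.
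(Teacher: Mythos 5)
Your proof is correct and follows essentially the same route as the paper's: both use the Slater policy $\pi^\circ$ together with $\textnormal{OPT}_{r,G}\le L$ (and $V^{\pi^\circ}(r)\ge 0$) to confine the dual optimum to the $\ell_1$-ball of radius $L/\rho$, and then conclude via strong duality for CMDPs and the max--min inequality. The only cosmetic difference is that you route the argument through an attained dual minimizer $\lambda^\star$, whereas the paper shows directly that every $\lambda$ with $\lVert\lambda\rVert_1 > L/\rho$ yields a dual value exceeding the value at $\lambda = \underline 0$, so it never needs to invoke attainment.
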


\begin{proof}
		We start by proving the following result:
		\begin{equation*}
			\min_{\lambda \in \mathbb{R}_+^m : \lVert \lambda \rVert_1\in [0,\nicefrac L \rho]} \max_{  \pi\in\Pi} \mathcal{L}_{r, G}(\pi, \lambda)= \min_{ \lambda \in \mathbb{R}^m_{\geq0}} \max_{  \pi\in \Pi} \mathcal{L}_{r, G}(\pi, \lambda).
		\end{equation*}
		
		To do so, let us first notice that, for every $\lambda\in \mathbb{R}^m_{\geq0}$ such that $\lVert\lambda \rVert_1 > \nicefrac L \rho$:
		\begin{equation*}
			\max_{  \pi\in \Pi}\mathcal{L}_{r, G}(\pi, \lambda) \ge \mathcal{L}_{r, G}(\pi^\circ, \lambda) \ge -\sum_{i \in [m]}\lambda_i \left(V^{\pi^\circ}(g_i)-\alpha_i\right) \ge \lVert\lambda\rVert_1\rho > L,
		\end{equation*}
		where we recall that $\pi^\circ:=\argmax_{\pi \in \Pi} \min_{i \in [m]} \left(\alpha_i-V^\pi(g_i)\right)$.
		Moreover:
		\begin{equation*}
			\min_{\lambda \in \mathbb{R}_+^m :\lVert \lambda \rVert_1\in [0,\nicefrac L \rho]} \max_{  \pi\in\Pi} \mathcal{L}_{r, G}(\pi, \lambda) \le \max_{  \pi \in \Pi} \mathcal{L}_{r, G}(\pi, \underline 0) = \max_{  \pi\in\Pi} V^\pi(r) \le L.
		\end{equation*}
		This implies that: 
		\begin{align*}
			\min_{ \lambda \in \mathbb{R}^m_{\geq0}} \max_{  \pi\in\Pi}\mathcal{L}_{r, G}(\pi, \lambda) & = \min\left\{\min_{\lambda \in \mathbb{R}_+^m :\lVert \lambda \rVert_1\in [0,\nicefrac L \rho]} \max_{  \pi\in\Pi} \mathcal{L}_{r, G}(\pi, \lambda), \min_{\lambda \in \mathbb{R}_+^m :\lVert \lambda \rVert_1> \nicefrac L \rho} \max_{  \pi\in\Pi} \mathcal{L}_{r, G}(\pi, \lambda)\right\} \\ &= \min_{\lambda \in \mathbb{R}_+^m :\lVert \lambda \rVert_1\in [0,\nicefrac L \rho]} \max_{  \pi \in \Pi}\mathcal{L}_{r, G}(\pi, \lambda).
		\end{align*}
		
		In conclusion,
		\begin{align*}
			\text{OPT}_{r,G} & =
			\max_{  \pi\in\Pi} \min_{ \lambda \in \mathbb{R}^m_{\geq0}}  \mathcal{L}_{r, G}(\pi, \lambda) \\
			& \le \max_{  \pi \in \Pi} \min_{\lambda \in \mathbb{R}_+^m :\lVert \lambda \rVert_1 \in [0,\nicefrac L \rho]}  \mathcal{L}_{r, G}(\pi, \lambda) \\
			& \le \min_{\lambda \in \mathbb{R}_+^m :\lVert \lambda \rVert_1 \in [0,\nicefrac L \rho]} \max_{  \pi \in \Pi} \mathcal{L}_{r, G}(\pi, \lambda)\\
			& = \min_{ \lambda \in \mathbb{R}_{\geq0}^m} \max_{  \pi\in\Pi} \mathcal{L}_{r, G}(\pi, \lambda)\\
			& = \text{OPT}_{r,G},
		\end{align*}
		where the second inequality above holds by the \emph{max-min} inequality and the last equality by strong duality in CMDPs (refer to~\citep{Altman1999ConstrainedMD}).
		This concludes the proof.
	\end{proof}
	
	Now we extend the result showing that, depending on whether an occupancy is safe or not, there exist values of $\lambda$ such that the optimal value of the Lagrangian coincides with the optimum of Program~\eqref{lp:offline_opt}.

\begin{lemma}
	\label{lem:strong_zero}
	For every reward vector $r \in [0,1]^{|X \times A|}$, constraint cost matrix $G \in [0,1]^{|X \times A| \times m}$, and threshold vector $\alpha \in [0,L]^m$  and $\pi\in\Pi \ s.t. \ V^\pi(g_i)\leq \alpha_i$ $\forall i\in[m]$, it holds:
	\begin{equation*}
		\mathcal{L}_{r, G}(\pi, \underline 0)\leq \text{OPT}_{r,G}.
	\end{equation*}
\end{lemma}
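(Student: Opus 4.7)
The plan is to observe that evaluating the Lagrangian at $\lambda = \underline{0}$ kills the penalty term, so $\mathcal{L}_{r,G}(\pi,\underline{0})$ collapses to the reward value $V^\pi(r)$. Once this is done, the claim follows immediately from the definition of $\text{OPT}_{r,G}$ together with the hypothesis that $\pi$ is feasible.

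Concretely, I would proceed in two short steps. First, I would unfold the definition of the Lagrangian function:
\begin{equation*}
\mathcal{L}_{r,G}(\pi, \underline{0}) = V^\pi(r) - \sum_{i \in [m]} 0 \cdot \left(V^\pi(g_i) - \alpha_i\right) = V^\pi(r).
\end{equation*}
Second, I would note that the hypothesis $V^\pi(g_i) \leq \alpha_i$ for every $i \in [m]$ means exactly that $\pi$ is a feasible policy for Problem~\eqref{lp:offline_opt}. Hence, since $\text{OPT}_{r,G}$ is defined as the supremum of $V^\pi(r)$ over all feasible policies, we have $V^\pi(r) \leq \text{OPT}_{r,G}$. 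Combining the two observations yields $\mathcal{L}_{r,G}(\pi, \underline{0}) \leq \text{OPT}_{r,G}$, as desired.

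There is no real obstacle here: the statement is essentially a sanity check that the zero multiplier choice in the Lagrangian relaxation is never advantageous for a feasible policy, which is the textbook weak-duality intuition applied to the specific parametrization used by \texttt{CPD-PO}. The only thing worth being careful about is matching notation (using $\underline{0} \in \mathbb{R}^m_{\geq 0}$ for the zero vector of Lagrange multipliers, consistently with Lemma~\ref{lem:lim_strong}).
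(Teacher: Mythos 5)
Your proof is correct and matches the paper's argument exactly: the paper also dismisses this lemma as following directly from the definition of $\text{OPT}_{r,G}$ in Problem~\eqref{lp:offline_opt}, and your two steps (the penalty term vanishes at $\lambda=\underline{0}$, and feasibility of $\pi$ gives $V^\pi(r)\leq \text{OPT}_{r,G}$) are precisely the details the paper leaves implicit.
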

\begin{proof}
	The proof directly follows from the definition of $\text{OPT}_{r,G}$ induced by Program~\eqref{lp:offline_opt}.
\end{proof}

\strongplus*
\begin{proof}
 We analyze separately the case in which $\pi$ satisfies the constraints \emph{(i)} and the case $\pi$ is not safe \emph{(ii)}.
 
 \emph{(i).} We start analyzing $\pi$ s.t. $V^\pi(g_i) \leq \alpha_i$ for all $i\in[m]$. In such a case, it is easy to check that $ \argmax_{\lambda\in\left[0,\frac{L+1}{\rho}\right]^m} \sum_{i\in[m]}\lambda_i \left(V^{\pi}(g_i)-\alpha_i\right) = \underline 0$; thus, employing Lemma~\ref{lem:strong_zero} gives the result.
 
\emph{(ii).} 
We then consider the case $\pi$ is not safe. In such a case, $\pi$ may either partially satisfy the constraints or violating all of the them.
First of all we notice that, in such a scenario, $ \argmax_{\lambda\in\left[0,\frac{L+1}{\rho}\right]^m} \sum_{i\in[m]}\lambda_i \left(V^\pi(g_i) -\alpha_i\right)=\bar\lambda$ where $\bar\lambda$ is the Lagrangian vector composed by $0$ values for the constraints which are violated and $\nicefrac{L+1}{\rho}$ values for the others. 

Indeed, it holds:
\begin{subequations}
 \begin{align}
 	\mathcal{L}_{r,G}\left(\pi,\bar\lambda\right) &= V^\pi(r) - \sum_{i\in[m]}\bar\lambda_i \left(V^\pi(g_i)-\alpha_i\right) \nonumber\\
 	&\leq \max_{\pi\in\Pi} V^\pi(r) -  \frac{L+1}{\rho} \sum_{i\in[m]}\left[V^\pi(g_i) -\alpha_i\right]^+ \label{eq1:strong_strict} \\
 	&\leq \max_{\pi\in\Pi} V^\pi(r) -  \frac{L}{\rho} \sum_{i\in[m]}\left[V^\pi(g_i) -\alpha_i\right]^+ \nonumber\\
 	& \le  \max_{  \pi\in\Pi} \min_{\lVert \lambda \rVert_1\in [0,L/\rho]} V^\pi(r) - \sum_{i \in [m]}\lambda_i\left[V^\pi(g_i)-\alpha_i\right]^+\nonumber\\ 
 	& \le \min_{\lVert \lambda \rVert_1\in [0,L/\rho]} \max_{ \pi\in\Pi} V^\pi(r) - \sum_{i \in [m]}\lambda_i\left[V^\pi(g_i)-\alpha_i\right]^+ \label{eq2:strong_strict}\\ 
 	& \le \min_{\lVert \lambda \rVert_1\in [0,L/\rho]} \max_{  \pi\in\Pi} \mathcal{L}_{r, G}\left(\pi, \lambda\right) \nonumber\\
 	& = \text{OPT}_{r,G} \label{eq3:strong_strict} ,
 \end{align}
 \end{subequations}
 where the Inequality~\eqref{eq1:strong_strict} holds by definition of $\bar\lambda$, the Inequality~\eqref{eq2:strong_strict} holds by the \emph{max-min inequality} and  Inequality~\eqref{eq3:strong_strict} follows from Lemma~\ref{lem:lim_strong}.
 This concludes the proof.
\end{proof}

Finally, we show that the sequence of Lagrangian vector $\lambda_{t}$ selected by Algorithm~\ref{alg: main_alg} guarantees that the value of the Lagrangian attained by $\pi_t$ is always upper bounded by the value of the Lagrangian attained by the optimum $\pi^*$ up to "confidence bound" terms.
\lagrangian*
\begin{proof}
		First of all, we notice that, when $\lambda_t=\underline 0$, 	$\mathcal{L}_{   r,  G}( \pi^*,   \lambda_t)= \text{OPT}_{ r, G}$ by definition. Furthermore when  $\lambda_t$ is the $\frac{L+1}{\rho}$ vector, it holds that $\mathcal{L}_{   r,  G}( \pi^*,   \lambda_t)\geq \text{OPT}_{ r, G}$ since $\pi^*$ is feasible. Same reasoning holds when $\lambda_t$ is any vector in $\left\{0,\frac{L+1}{\rho}\right\}^m$. Thus, it is always the case that $\mathcal{L}_{   r,  G}( \pi^*,   \lambda_t)\geq \text{OPT}_{ r, G}$.

		Thus, we proceed as follows:
		\begin{subequations}
			\begin{align}
			 &\mathcal{L}_{   r, G}( \pi_t,   \lambda_t)\nonumber\\&\leq V^{\pi_t}(r) - \sum_{i\in[m]}\lambda_{t,i}\left(V^{\pi_t}(\underline{g}_{t,i}) -\alpha_i\right) \label{strong:eq1}\\
			 & = V^{\pi_t}(r) - \sum_{i\in[m]} \lambda_{t,i}\left(V^{\pi_t}(\underline{g}_{t,i}) -\alpha_i\right) \pm \sum_{i\in[m]}\lambda_{t,i}\left(V^{\pi_t, \widehat P_t}(\underline{g}_{t,i})-\alpha_i\right) \nonumber\\
			 & \leq V^{\pi_t}(r) - \sum_{i\in[m]}\lambda_{t,i}\left(V^{\pi_t, \widehat{P}_t}(\underline{g}_{t,i}) -\alpha_i\right) +  \frac{4Lm}{\rho}\sum_{x\in X, a\in A}\left| \mathbb{P}_{\pi_t,\widehat{P}_t} (x,a)-\mathbb{P}_{\pi_t, P}(x,a) \right|\label{strong:eq2}\\
			 & = V^{\pi_t}(r)  - \max_{\lambda\in\left\{0,\frac{L+1}{\rho}\right\}^m}\sum_{i\in[m]}\lambda_i\left(V^{\pi_t,\widehat{P}_t}(\underline{g}_{t,i}) -\alpha_i\right) \nonumber\\ &\mkern300mu+  \frac{4Lm}{\rho}\sum_{x\in X, a\in A}\left| \mathbb{P}_{\pi_t,\widehat{P}_t} (x,a)-\mathbb{P}_{\pi_t, P}(x,a) \right| \nonumber\\
			  & \leq V^{\pi_t}(r) - \max_{\lambda\in\left\{0,\frac{L+1}{\rho}\right\}^m}\sum_{i\in[m]}\lambda_i\left(V^{\pi_t,\widehat{P}_t}\left({g}_i-2\xi_t\right)-\alpha_i\right) \nonumber\\ &\mkern300mu+  \frac{4Lm}{\rho}\sum_{x\in X, a\in A}\left| \mathbb{P}_{\pi_t,\widehat{P}_t} (x,a)-\mathbb{P}_{\pi_t, P}(x,a) \right|\label{strong:eq3}\\
			 & = V^{\pi_t}(r)  - \max_{\lambda\in\left\{0,\frac{L+1}{\rho}\right\}^m}\sum_{i\in[m]}\lambda_i\left(V^{\pi_t,\widehat{P}_t}\left({g}_i-2\xi_t\right) -\alpha_i\right)\nonumber\\ &\mkern50mu +  \frac{4Lm}{\rho}\sum_{x\in X, a\in A}\left| \mathbb{P}_{\pi_t,\widehat{P}_t} (x,a)-\mathbb{P}_{\pi_t, P}(x,a) \right|\pm \max_{\lambda\in\left\{0,\frac{L+1}{\rho}\right\}^m}\sum_{i\in[m]}\lambda_i\left(V^{\pi_t}\left({g_i}-2\xi_t\right) -\alpha_i\right) \nonumber\\ 
			 &\leq V^{\pi_t}(r)  - \max_{\lambda\in\left\{0,\frac{L+1}{\rho}\right\}^m}\sum_{i\in[m]}\lambda_i\left(V^{\pi_t}\left({g_i}-2\xi_t\right)-\alpha_i\right) \nonumber\\ &\mkern300mu+  \frac{8Lm}{\rho}\sum_{x\in X, a\in A}\left| \mathbb{P}_{\pi_t,\widehat{P}_t} (x,a)-\mathbb{P}_{\pi_t, P}(x,a) \right| \label{strong:eq4}\\
			 & \leq V^{\pi_t}(r)  - \max_{\lambda\in\left\{0,\frac{L+1}{\rho}\right\}^m}\sum_{i\in[m]}\lambda_i \left(V^{\pi_t}\left({g_i}\right) -\alpha_i\right) + \frac{4Lm}{\rho}V^{\pi_t}(\xi_t) \nonumber\\ &\mkern300mu+ \frac{8Lm}{\rho}\sum_{x\in X, a\in A}\left| \mathbb{P}_{\pi_t,\widehat{P}_t} (x,a)-\mathbb{P}_{\pi_t, P}(x,a) \right|\label{strong:eq5}\\
			 & =V^{\pi_t}(r)  - \max_{\lambda\in\left[0,\frac{L+1}{\rho}\right]^m}\sum_{i\in[m]}\lambda_i \left(V^{\pi_t}\left({g_i}\right) -\alpha_i\right)  + \frac{4Lm}{\rho}V^{\pi_t}(\xi_t)\nonumber\\ &\mkern300mu+  \frac{8Lm}{\rho}\sum_{x\in X, a\in A}\left| \mathbb{P}_{\pi_t,\widehat{P}_t} (x,a)-\mathbb{P}_{\pi_t, P}(x,a) \right| \nonumber\\
			 & \leq \text{OPT}_{ r, G} + \frac{4Lm}{\rho}V^{\pi_t}(\xi_t)+  \frac{8Lm}{\rho}\sum_{x\in X, a\in A}\left| \mathbb{P}_{\pi_t,\widehat{P}_t} (x,a)-\mathbb{P}_{\pi_t, P}(x,a) \right|,\label{strong:eq6}
			 \end{align}
		\end{subequations}
		where Inequality~\eqref{strong:eq1} holds with probability at least $1-\delta $ by Lemma~\ref{lem:gen_confidence}, Inequality~\eqref{strong:eq2} holds by Hölder inequality, Inequality~\eqref{strong:eq3} holds with probability at least $1-\delta$ by Lemma~\ref{lem:gen_confidence}, Inequality~\eqref{strong:eq4} holds by Hölder inequality, Inequality~\eqref{strong:eq5} holds by definition of $\lambda_t$ and Inequality~\eqref{strong:eq6} holds by Lemma~\ref{lem:strong_plus}.
		
		Thus, it holds:
		\begin{align*}
			 \mathcal{L}_{   r,  G}( \pi^*,   \lambda_t)&\geq \text{OPT}_{ r, G} \\&\geq  \mathcal{L}_{   r, G}( \pi_t,   \lambda_t) - \frac{4Lm}{\rho}V^{\pi_t}(\xi_t )-  \frac{8Lm}{\rho}\sum_{x\in X, a\in A}\left| \mathbb{P}_{\pi_t,\widehat{P}_t} (x,a)-\mathbb{P}_{\pi_t, P}(x,a) \right|,
		\end{align*}
		which concludes the proof.
\end{proof}

\begin{lemma}
	\label{lem:lagrangian_cor}
	With probability at least $1-\delta$, Algorithm~\ref{alg: main_alg} guarantees, for all $t\in[T]$:
	\begin{equation*}
		\mathcal{L}_{  r,  G}( \pi^*,   \lambda_t) \geq \mathcal{L}_{   r,  G}( \pi_t,   \nicefrac{\lambda_t L}{L+1}) - \frac{2Lm}{\rho}V^{\pi_t}(\xi_t) -  \frac{4Lm}{\rho}\sum_{x\in X, a\in A}\left| \mathbb{P}_{\pi_t,\widehat{P}_t} (x,a)-\mathbb{P}_{\pi_t, P}(x,a) \right|.
	\end{equation*}
\end{lemma}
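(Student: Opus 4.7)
The plan is to mirror the chain of inequalities used to prove Lemma~\ref{lem:lagrangian}, but applied to the scaled multiplier $\nicefrac{\lambda_t L}{L+1}$ instead of $\lambda_t$ itself. The crucial observation is that the dual update in Line~\ref{alg2: line6} picks $\lambda_{t,i}$ as an argmax over $\{0,(L+1)/\rho\}$ of $\lambda\,(V^{\pi_t,\widehat P_t}(\underline g_{t,i})-\alpha_i)$; consequently, the scaled vector $\nicefrac{\lambda_t L}{L+1}\in\{0,L/\rho\}^m$ is simultaneously the argmax of the same quantity over $\{0,L/\rho\}^m$. This alignment is exactly what lets me replace the multiplier box $[0,(L+1)/\rho]^m$ of Lemma~\ref{lem:strong_plus} by $[0,L/\rho]^m$ and still conclude via Lemma~\ref{lem:lim_strong}, whose $\ell_1$-bound $L/\rho$ matches perfectly.

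First I would invoke Lemma~\ref{lem:gen_confidence} to replace $g_i$ by the pessimistic cost $\underline g_{t,i}$, since $\underline g_{t,i}\le g_i$ and the multipliers are non-negative, upper bounding $\mathcal{L}_{r,G}(\pi_t,\nicefrac{\lambda_t L}{L+1})$ by $V^{\pi_t}(r)-\frac{L}{L+1}\sum_i\lambda_{t,i}\bigl(V^{\pi_t,P}(\underline g_{t,i})-\alpha_i\bigr)$. Next I would add and subtract $\frac{L}{L+1}\sum_i\lambda_{t,i}V^{\pi_t,\widehat P_t}(\underline g_{t,i})$ and apply Hölder's inequality on the state–action visit distributions to swap $P$ with $\widehat P_t$. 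Because $\frac{L}{L+1}\cdot\frac{L+1}{\rho}=\frac{L}{\rho}$, this first Hölder step produces a transition-error contribution of order $\frac{2Lm}{\rho}\sum_{x,a}|\mathbb{P}_{\pi_t,\widehat P_t}(x,a)-\mathbb{P}_{\pi_t,P}(x,a)|$, which is exactly half of the corresponding term in Lemma~\ref{lem:lagrangian}.

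Using the argmax property of the scaled multiplier in $\{0,L/\rho\}^m$, I would then rewrite the resulting sum as $\max_{\lambda\in\{0,L/\rho\}^m}\sum_i\lambda_i\bigl(V^{\pi_t,\widehat P_t}(\underline g_{t,i})-\alpha_i\bigr)$. Applying the upper half of the cost confidence bound, $\underline g_{t,i}\ge g_i-2\xi_t$, followed by a second Hölder step that converts $V^{\pi_t,\widehat P_t}$ back into $V^{\pi_t,P}$, introduces both the $\frac{2Lm}{\rho}V^{\pi_t}(\xi_t)$ term and an additional $\frac{2Lm}{\rho}\sum_{x,a}|\mathbb{P}_{\pi_t,\widehat P_t}(x,a)-\mathbb{P}_{\pi_t,P}(x,a)|$ contribution; together with the first Hölder step this totals the $\frac{4Lm}{\rho}$ factor on the transition error stated in the lemma. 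The residual expression is of the form $V^{\pi_t}(r)-\max_{\lambda\in[0,L/\rho]^m}\sum_i\lambda_i(V^{\pi_t}(g_i)-\alpha_i)$, which I would bound by $\text{OPT}_{r,G}$ by means of a straightforward rerun of the proof of Lemma~\ref{lem:strong_plus} with $(L+1)/\rho$ replaced by $L/\rho$ throughout and a direct appeal to Lemma~\ref{lem:lim_strong}. Finally, $\text{OPT}_{r,G}\le\mathcal{L}_{r,G}(\pi^*,\lambda_t)$ because $\pi^*$ is feasible and $\lambda_t\ge 0$, which yields the claim.

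The main obstacle is careful bookkeeping: tracking the interplay between the scaling factor $\nicefrac{L}{L+1}$, the algorithmic cap $(L+1)/\rho$ on $\lambda_t$, and the $L/\rho$ bound used in Lemmas~\ref{lem:lim_strong} and~\ref{lem:strong_plus} must be done so that the constants collapse exactly to the halved $\frac{2Lm}{\rho}$ and $\frac{4Lm}{\rho}$ claimed (rather than $\frac{4Lm}{\rho}$ and $\frac{8Lm}{\rho}$ as in Lemma~\ref{lem:lagrangian}). Beyond this bookkeeping, and the minor rephrasing of Lemma~\ref{lem:strong_plus} over the smaller box $[0,L/\rho]^m$, the argument is a direct transcription of the proof of Lemma~\ref{lem:lagrangian}.
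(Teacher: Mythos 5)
Your proposal is correct and follows essentially the same route as the paper: it replays the chain of inequalities from Lemma~\ref{lem:lagrangian} with the scaled multiplier $\nicefrac{\lambda_t L}{L+1}$ (exploiting that scaling preserves the argmax over the binary set), tracks the halved constants $\nicefrac{2Lm}{\rho}$ and $\nicefrac{4Lm}{\rho}$ exactly as the paper does, and closes via Lemma~\ref{lem:lim_strong} together with $\mathcal{L}_{r,G}(\pi^*,\lambda_t)\geq \textnormal{OPT}_{r,G}$. The only cosmetic difference is that the paper relaxes the box maximum over $[0,\nicefrac{L}{\rho}]^m$ directly to the $\ell_1$-ball maximum and invokes Lemma~\ref{lem:lim_strong}, rather than rerunning the case analysis of Lemma~\ref{lem:strong_plus} with $\nicefrac{L}{\rho}$ in place of $\nicefrac{L+1}{\rho}$; both variants rest on the same duality lemma.
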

\begin{proof}
	Similarly to Lemma~\ref{lem:lagrangian}, we notice that $\mathcal{L}_{  r, G}( \pi^*,   \lambda_t)\geq \text{OPT}_{ r, G}$.
	
	Thus, following the same steps as in Lemma~\ref{lem:lagrangian}, we proceed as follows:
		\begin{align}
		&\mathcal{L}_{  r,  G}( \pi_t,   \nicefrac{\lambda_t L}{L+1})\nonumber\\
			& \leq V^{\pi_t}(r) - \max_{\lambda\in\left\{0,\frac{L+1}{\rho}\right\}^m}\sum_{i\in[m]}\frac{\lambda_i L}{L+1}\left(V^{\pi_t}({g}_i) -\alpha_i\right) + \frac{2Lm}{\rho}V^{\pi_t}(\xi_t) \nonumber\\ &\mkern300mu+  \frac{4Lm}{\rho}\sum_{x\in X, a\in A}\left| \mathbb{P}_{\pi_t,\widehat{P}_t} (x,a)-\mathbb{P}_{\pi_t, P}(x,a) \right|\nonumber\\
			& =V^{\pi_t}(r) - \max_{\lambda\in\left[0,\frac{L+1}{\rho}\right]^m}\sum_{i\in[m]}\frac{\lambda_i L}{L+1}\left(V^{\pi_t}({g}_i) -\alpha_i\right)+ \frac{2Lm}{\rho}V^{\pi_t}(\xi_t) \nonumber\\ &\mkern300mu+  \frac{4Lm}{\rho}\sum_{x\in X, a\in A}\left| \mathbb{P}_{\pi_t,\widehat{P}_t} (x,a)-\mathbb{P}_{\pi_t, P}(x,a) \right| \nonumber\\
			& =V^{\pi_t}(r) - \max_{\lambda\in\left[0,\frac{L}{\rho}\right]^m}\sum_{\in[m]}\lambda_i\left(V^{\pi_t}({g_i})-\alpha_i\right) + \frac{2Lm}{\rho}V^{\pi_t}(\xi_t) \nonumber\\ &\mkern300mu+ \frac{4Lm}{\rho}\sum_{x\in X, a\in A}\left| \mathbb{P}_{\pi_t,\widehat{P}_t} (x,a)-\mathbb{P}_{\pi_t, P}(x,a) \right| \nonumber\\
			& \leq \max_{\pi\in\Pi} \left[V^{\pi}(r) - \max_{\lambda\in\left[0,\frac{L}{\rho}\right]^m}\sum_{i\in[m]}\lambda_i\left(V^\pi({g_i}) -\alpha_i\right)\right] + \frac{2Lm}{\rho}V^{\pi_t}(\xi_t) \nonumber\\ &\mkern300mu+  \frac{4Lm}{\rho}\sum_{x\in X, a\in A}\left| \mathbb{P}_{\pi_t,\widehat{P}_t} (x,a)-\mathbb{P}_{\pi_t, P}(x,a) \right| \nonumber\\
			& \leq \max_{\pi\in\Pi} \left[V^{\pi}(r)- \max_{\|\lambda\|_1\in\left[0,\frac{L}{\rho}\right]}\sum_{i\in[m]}\lambda_i\left(V^\pi({g}_i)-\alpha_i\right)\right] + \frac{2Lm}{\rho}V^{\pi_t}(\xi_t) \nonumber\\ &\mkern300mu+  \frac{4Lm}{\rho}\sum_{x\in X, a\in A}\left| \mathbb{P}_{\pi_t,\widehat{P}_t} (x,a)-\mathbb{P}_{\pi_t, P}(x,a) \right|  \nonumber\\
			& \leq \text{OPT}_{ r, G} + \frac{2Lm}{\rho}V^{\pi_t}(\xi_t)+  \frac{4Lm}{\rho}\sum_{x\in X, a\in A}\left| \mathbb{P}_{\pi_t,\widehat{P}_t} (x,a)-\mathbb{P}_{\pi_t, P}(x,a) \right|,\label{strong_cor:eq6}
		\end{align}
	where Inequality~\eqref{strong_cor:eq6} holds by Lemma~\ref{lem:lim_strong}.
	
	Thus, it holds:
	\begin{align*}
		\mathcal{L}_{   r,  G}( \pi^*,   \lambda_t)&\geq \text{OPT}_{r,G} \\&\geq  \mathcal{L}_{   r,  G}( \pi_t,   \nicefrac{\lambda_t L}{L+1}) - \frac{2Lm}{\rho}V^{\pi_t}(\xi_t)-  \frac{4Lm}{\rho}\sum_{x\in X, a\in A}\left| \mathbb{P}_{\pi_t,\widehat{P}_t} (x,a)-\mathbb{P}_{\pi_t, P}(x,a) \right|,
	\end{align*}
	which concludes the proof.
\end{proof}

\subsection{Regret}
\label{app:regret}

We first show the regret bound attained by \texttt{PO-DB} on the Lagrangian loss built by Algorithm~\ref{alg: main_alg}. Precisely, it holds the following result.

\regretlag*

\begin{proof}
	We first notice that, by simple computation, it holds:
	\begin{align*}
		&\sum_{t=1}^TV^{\pi_t}\left(\frac{(L+1)m}{\rho}\right)- \sum_{t=1}^T\left[ V^{\pi_t}(\overline r_{t}) - \sum_{i\in[m]}\lambda_{t,i}V^{\pi_t}\left(\underline g_{t,i}-\frac{\alpha_i}{L}\right) \right]\\ & \mkern200mu-\sum_{t=1}^TV^{\pi}\left(\frac{(L+1)m}{\rho}\right)- \sum_{t=1}^T\left[ V^{\pi}(\overline r_{t}) - \sum_{i\in[m]}\lambda_{t,i}V^{\pi}\left(\underline g_{t,i}-\frac{\alpha_i}{L}\right) \right]\\
		& =- \sum_{t=1}^T\left[ V^{\pi_t}(\overline r_{t}) - \sum_{i\in[m]}\lambda_{t,i}V^{\pi_t}\left(\underline g_{t,i}-\frac{\alpha_i}{L}\right) \right] + \sum_{t=1}^T\left[ V^{\pi}(\overline r_{t}) - \sum_{i\in[m]}\lambda_{t,i}V^{\pi}\left(\underline g_{t,i}-\frac{\alpha_i}{L}\right) \right]\\
		&=\sum_{t=1}^T \left[V^{\pi}(\overline r_t)- \sum_{i\in[m]}\lambda_{t,i} \left( V^{\pi}(\underline{g}_{t,i})  - \alpha_i  \right)\right] - \sum_{t=1}^T \left[V^{\pi_t}(\overline r_t)- \sum_{i\in[m]}\lambda_{t,i} \left( V^{\pi_t}(\underline{g}_{t,i})  - \alpha_i  \right)\right],
	\end{align*}
where the first step holds since $\nicefrac{(L+1)m}{\rho}$ is constant for any state-action pair and the second step holds since $V^\pi(\nicefrac{\alpha_i}{L})=\alpha_i$.

Then, we notice that, any regret minimizer algorithm $\mathcal{A}$, which attains $\mathcal{E}_{\mathcal{A}}$ regret upper-bound when the losses are bounded in $[0,1]$ (or in $[0,L]$ for MDPs), may achieve $C\cdot\mathcal{E}_{\mathcal{A}}$ regret upper-bound when the range of the losses is $C$ and it is known to the algorithm. The result is intuitive, since it is always possible to apply an affine transformation to the losses received and, thus, to scale them in $[0,1]$. Then, the algorithm can be fed with the scaled loss as expected, but it will suffer a loss multiplied by a $C$ factor, attaining the aforementioned $C\cdot\mathcal{E}_{\mathcal{A}}$ bound.

Noticing that the loss vector received by \texttt{PO-DB} is multiplied by an additional $\mathcal{O}(Lm/\rho)$ factor given by the definition of the Lagrangian variable in Algorithm~\ref{alg: main_alg}, we can employ the standard regret bound \texttt{PO-DB} (see Lemma~\ref{lem:regret_luo}) with the additional $\mathcal{O}(Lm/\rho)$ factor. Thus, it holds, with probability at least $1-\mathcal{O}(\delta)$:
	\begin{align*}
	&\sum_{t=1}^T \left[V^{\pi}(\overline r_t)- \sum_{i\in[m]}\lambda_{t,i} \left( V^{\pi}(\underline{g}_{t,i})  - \alpha_i  \right)\right] - \sum_{t=1}^T \left[V^{\pi_t}(\overline r_t)- \sum_{i\in[m]}\lambda_{t,i} \left( V^{\pi_t}(\underline{g}_{t,i})  - \alpha_i  \right)\right]\\
	&=\sum_{t=1}^TV^{\pi_t}\left(\frac{(L+1)m}{\rho}\right)- \sum_{t=1}^T\left[ V^{\pi_t}(\overline r_{t}) - \sum_{i\in[m]}\lambda_{t,i}V^{\pi_t}\left(\underline g_{t,i}-\frac{\alpha_i}{L}\right) \right]\\ & \mkern200mu-\sum_{t=1}^TV^{\pi}\left(\frac{(L+1)m}{\rho}\right)- \sum_{t=1}^T\left[ V^{\pi}(\overline r_{t}) - \sum_{i\in[m]}\lambda_{t,i}V^{\pi}\left(\underline g_{t,i}-\frac{\alpha_i}{L}\right) \right]\\
	&\leq \widetilde{\mathcal{O}}\left(\frac{L^3m}{\rho}|X|\sqrt{|A|T}+ \frac{L^5m}{\rho}\right),
\end{align*}
which concludes the proof.
\end{proof}

Then, we show the concentration rate of the confidence intervals.

\begin{lemma}
	\label{lem:confidence_conc_rew}
	With probability at least $1-\delta$, it holds:
	\begin{equation*}
		\sum_{t=1}^T V^{\pi_t}(\phi_{t}) \leq 4\sqrt{L|X||A|T\ln\left( \frac{T|X||A|}{\delta}\right)} + L\sqrt{2T\ln\frac{1}{\delta}}
	\end{equation*}
\end{lemma}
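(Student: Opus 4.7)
The plan is to decompose $\sum_{t=1}^T V^{\pi_t}(\phi_t)$ into a martingale term plus a pathwise term that can be controlled by a standard ``sum of inverse square roots'' argument.

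First I would rewrite, for each episode $t$,
\[
V^{\pi_t}(\phi_t) = \mathbb{E}_{\pi_t,P}\!\left[\sum_{k=0}^{L-1}\phi_t(x_k,a_k)\,\Big|\,\mathcal{F}_{t-1}\right],
\]
where $\mathcal{F}_{t-1}$ is the sigma-algebra generated by the history up to the beginning of episode $t$. Since $\phi_t$ is $\mathcal{F}_{t-1}$-measurable and each $\phi_t(x,a)\le 1$, the per-episode summand is bounded in $[0,L]$. Hence, the sequence
\[
Z_t := V^{\pi_t}(\phi_t) - \sum_{k=0}^{L-1}\phi_t(x_k^t,a_k^t)
\]
forms a martingale difference sequence with $|Z_t|\le L$. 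Applying Azuma--Hoeffding to $\sum_{t=1}^T Z_t$ yields, with probability at least $1-\delta$,
\[
\sum_{t=1}^T V^{\pi_t}(\phi_t) \le \sum_{t=1}^T\sum_{k=0}^{L-1}\phi_t(x_k^t,a_k^t) + L\sqrt{2T\ln(1/\delta)},
\]
which accounts for the second term in the target bound.

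Next I would bound the pathwise sum by reorganizing it according to state-action pairs. Using the indicator notation from Section~\ref{sec:unknown_param},
\[
\sum_{t=1}^T\sum_{k=0}^{L-1}\phi_t(x_k^t,a_k^t) = \sum_{(x,a)\in X\times A}\sum_{t=1}^T \mathbbm{1}_t(x,a)\,\phi_t(x,a).
\]
For a fixed pair $(x,a)$, whenever $(x,a)$ is visited at episode $t$, $N_t(x,a)$ takes the successive values $1,2,\dots,N_T(x,a)$, so by the definition of $\phi_t$,
\[
\sum_{t=1}^T \mathbbm{1}_t(x,a)\,\phi_t(x,a) \le \sum_{n=1}^{N_T(x,a)}\sqrt{\frac{4\ln(T|X||A|/\delta)}{n}} \le 4\sqrt{N_T(x,a)\ln(T|X||A|/\delta)},
\]
using $\sum_{n=1}^N 1/\sqrt{n}\le 2\sqrt{N}$.

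Summing over $(x,a)$ and applying Cauchy--Schwarz together with the identity $\sum_{(x,a)}N_T(x,a)=LT$ (each episode contributes exactly $L$ visits) gives
\[
\sum_{(x,a)}4\sqrt{N_T(x,a)\ln(T|X||A|/\delta)} \le 4\sqrt{\ln(T|X||A|/\delta)}\,\sqrt{|X||A|\cdot LT} = 4\sqrt{L|X||A|T\ln(T|X||A|/\delta)}.
\]
Combining this with the Azuma--Hoeffding term yields the claimed bound. No step is truly delicate here; the only care needed is to ensure that $\phi_t$ is $\mathcal{F}_{t-1}$-measurable (which it is, since it only depends on counts and estimators from episodes strictly before $t$) so that the Azuma--Hoeffding application is legitimate.
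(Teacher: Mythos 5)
Your proof is correct and follows essentially the same route as the paper's: an Azuma--Hoeffding step to replace $V^{\pi_t}(\phi_t)$ by the realized per-trajectory sum, followed by a per-state-action-pair bound on $\sum_n n^{-1/2}$ and Cauchy--Schwarz using $\sum_{(x,a)}N_T(x,a)\leq LT$. The only caveat (shared with the paper, which glosses over it) is that with the paper's indexing $\phi_t$ is built from data up to and including episode $t$, so the predictability claim you make requires the usual convention that the bonus used in the martingale decomposition at episode $t$ is computed from episodes strictly before $t$.
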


\begin{proof}
	We first notice the following bound,
	 \begin{equation*}
	 	V^{\pi_t}(\phi_{t}) \leq L.
	 \end{equation*}
	 Thus, we can employ the Azuma inequality to bound the following Martingale difference sequence as
	\begin{equation*}
		\sum_{t=1}^TV^{\pi_t}(\phi_{t})-\sum_{t=1}^T\sum_{x,a}\phi_{t}(x,a)\mathbbm{1}_t\{x,a\} \leq L\sqrt{2T\ln\frac{1}{\delta}},
	\end{equation*} 
	which holds with probability $1-\delta$. Thus we can bound the quantity of interest as follows,
	\begin{align}
		\sum_{t=1}^TV^{\pi_t}(\phi_{t})
		& \leq  \sum_{t=1}^T\sum_{x,a}\phi_{t}(x,a) \mathbbm{1}_t\{x,a\} + L\sqrt{2T\ln\frac{1}{\delta}} \nonumber\\
		& =  \sqrt{4\ln\left(\frac{T|X||A|}{\delta}\right)}\sum_{t=1}^T\sum_{x,a} \sqrt{\frac{1}{\max\{1,N_{t}(x,a)\}}}\mathbbm{1}_t\{x,a\} + L\sqrt{2T\ln\frac{1}{\delta}} \nonumber\\
		& \leq 2\sqrt{4\ln\left(\frac{T|X||A|}{\delta}\right)}\sum_{x,a} \sqrt{N_{T}(x,a)} + L\sqrt{2T\ln\frac{1}{\delta}}\label{cum_vio:eq3_1_unknown}\\
		& \leq 4\sqrt{L|X||A|T\ln\left( \frac{T|X||A|}{\delta}\right)} + L\sqrt{2T\ln\frac{1}{\delta}},\label{cum_vio:eq3_unknown}
	\end{align}
	where Inequality~\eqref{cum_vio:eq3_1_unknown} holds since $\sum_{t=1}^T\frac{1}{t}\leq 2\sqrt{T}$ and Inequality~\eqref{cum_vio:eq3_unknown} follows from Cauchy-Schwarz inequality and noticing that $\sqrt{\sum_{x,a}N_T(x,a)}\leq \sqrt{LT}$. This concludes the proof.
\end{proof}

\begin{lemma}
		\label{lem:confidence_conc_costs}
	With probability at least $1-\delta$, it holds:
		\begin{equation*}
		\sum_{t=1}^T V^{\pi_t}(\xi_{t}) \leq 4\sqrt{L|X||A|T\ln\left( \frac{T|X||A|m}{\delta}\right)} + L\sqrt{2T\ln\frac{1}{\delta}}
	\end{equation*}
\end{lemma}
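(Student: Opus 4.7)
The plan is to mirror the proof of Lemma~\ref{lem:confidence_conc_rew} essentially verbatim, since the only structural difference between $\phi_t(x,a)$ and $\xi_t(x,a)$ is the extra factor of $m$ inside the logarithm, which arises from the union bound over the $m$ constraints taken in Lemma~\ref{lem:gen_confidence}. Concretely, $\xi_t$ still takes values in $[0,1]^{|X\times A|}$, so the value function $V^{\pi_t}(\xi_t)$ is bounded by $L$, and the functional form $\xi_t(x,a)=\min\{1,\sqrt{4\ln(T|X||A|m/\delta)/\max\{1,N_t(x,a)\}}\}$ is exactly analogous to that of $\phi_t$.

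First I would introduce the martingale difference sequence $V^{\pi_t}(\xi_t) - \sum_{x,a}\xi_t(x,a)\mathbbm{1}_t\{x,a\}$, which has bounded increments of magnitude $L$ since $\xi_t\in[0,1]^{|X\times A|}$ and the trajectory touches at most $L$ state-action pairs. Applying Azuma--Hoeffding gives, with probability at least $1-\delta$,
\begin{equation*}
\sum_{t=1}^T V^{\pi_t}(\xi_t) \leq \sum_{t=1}^T \sum_{x,a}\xi_t(x,a)\mathbbm{1}_t\{x,a\} + L\sqrt{2T\ln(1/\delta)}.
\end{equation*}

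Next I would bound the realized (empirical) sum. Substituting the definition of $\xi_t$ and reindexing by the visit counter $N_t(x,a)$, the contribution of each pair $(x,a)$ telescopes into a sum of the form $\sum_{n=1}^{N_T(x,a)} 1/\sqrt{n} \leq 2\sqrt{N_T(x,a)}$, which yields
\begin{equation*}
\sum_{t=1}^T\sum_{x,a}\xi_t(x,a)\mathbbm{1}_t\{x,a\} \leq 2\sqrt{4\ln\!\left(\tfrac{T|X||A|m}{\delta}\right)} \sum_{x,a}\sqrt{N_T(x,a)}.
\end{equation*}
A Cauchy--Schwarz step combined with $\sum_{x,a}N_T(x,a) \leq LT$ (since each episode visits $L$ state-action pairs) then gives $\sum_{x,a}\sqrt{N_T(x,a)} \leq \sqrt{|X||A|\cdot LT}$, which upon substitution produces the leading term $4\sqrt{L|X||A|T\ln(T|X||A|m/\delta)}$.

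There is no real obstacle here: the argument is a direct transcription of the proof of Lemma~\ref{lem:confidence_conc_rew}, with $\phi_t$ replaced by $\xi_t$ and the only bookkeeping change being the $m$ inside the logarithm, which is already accounted for in the definition of $\xi_t$ from Lemma~\ref{lem:gen_confidence}. Combining the Azuma term with the pigeonhole/Cauchy--Schwarz bound yields the stated inequality.
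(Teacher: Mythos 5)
Your proof is correct and is essentially identical to the paper's: the paper proves this lemma by simply invoking the proof of Lemma~\ref{lem:confidence_conc_rew} with $\phi_t$ replaced by $\xi_t$, and that proof uses exactly your decomposition (Azuma--Hoeffding on the martingale difference, the $\sum_{n}1/\sqrt{n}\leq 2\sqrt{N_T(x,a)}$ telescoping, and Cauchy--Schwarz with $\sum_{x,a}N_T(x,a)\leq LT$). The only bookkeeping change is the $m$ inside the logarithm coming from Lemma~\ref{lem:gen_confidence}, which you correctly identify.
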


\begin{proof}
	The proof follows the one of Lemma~\ref{lem:confidence_conc_rew}, replacing $\phi_{t}$ with $\xi_{t}$.
\end{proof}

We are now ready to prove the regret bound of Algorithm~\ref{alg: main_alg}.

\regret*

\begin{proof}
	We first notice that, by Lemma~\ref{lem:lagrangian}, we have that:	\begin{equation*}
	\mathcal{L}_{   r,  G}( \pi^*,   \lambda_t) \geq  \mathcal{L}_{   r, G}( \pi_t,   \lambda_t) - \frac{4Lm}{\rho}V^{\pi_t}(\xi_t )-  \frac{8Lm}{\rho}\sum_{x\in X, a\in A}\left| \mathbb{P}_{\pi_t,\widehat{P}_t} (x,a)-\mathbb{P}_{\pi_t, P}(x,a) \right|,  \quad    \forall t\in[T],
	\end{equation*} 
	with probability at least $1-\delta$ and where  $\pi^*$ is the optimal solution corresponding to $\text{OPT}_{{r}, {G}}$. Thus, we have that an upper bound on:
	\begin{equation}\label{eq:lagr_regret}
		\sum_{t=1}^T \left(\mathcal{L}_{   r,  G}( \pi^*,   \lambda_t) - \mathcal{L}_{   r,  G}( \pi_t,   \lambda_t) + \frac{4Lm}{\rho}V^{\pi_t}(\xi_t ) + \frac{8Lm}{\rho}\sum_{x\in X, a\in A}\left| \mathbb{P}_{\pi_t,\widehat{P}_t} (x,a)-\mathbb{P}_{\pi_t, P}(x,a) \right|\right)
	\end{equation}
	enforces the same bound on
	\begin{equation*}
		\sum_{t=1}^T \left[\mathcal{L}_{   r,  G}( \pi^*,   \lambda_t) - \mathcal{L}_{   r,  G}( \pi_t,   \lambda_t) + \frac{4Lm}{\rho}V^{\pi_t}(\xi_t ) + \frac{8Lm}{\rho}\sum_{x\in X, a\in A}\left| \mathbb{P}_{\pi_t,\widehat{P}_t} (x,a)-\mathbb{P}_{\pi_t, P}(x,a) \right|\right]^+,
	\end{equation*}
	since the term in the summation is always non-negative.
	
	Now, we employ Lemma~\ref{lem:regret_lag}, which implies that with probability at least $1-\mathcal{O}(\delta)$, for any $\pi\in\Pi$:
	\begin{align}
		\sum_{t=1}^T \left(V^{\pi}(\overline r_t)- \sum_{i\in[m]}\lambda_{t,i} \left( V^{\pi}(\underline{g}_{t,i})  - \alpha_i  \right)\right)\nonumber - &\sum_{t=1}^T \left(V^{\pi_t}(\overline r_t)- \sum_{i\in[m]}\lambda_{t,i} \left( V^{\pi_t}(\underline{g}_{t,i})  - \alpha_i  \right)\right) \nonumber\\&\leq \widetilde{\mathcal{O}}\left(\frac{L^3m}{\rho}|X|\sqrt{|A|T}+ \frac{L^5m}{\rho}\right)\label{eq2: lagrangian_regret},
	\end{align}
	which implies the same bound for $\pi^*$.
	
	We now show that the left-hand side of the aforementioned equation reduce to Equation~\eqref{eq:lagr_regret} up to sublinear terms.
	By Lemma~\ref{lem:rew_confidence},with probability at least $1-\delta$, it holds:
	\begin{equation*}
		 r \preceq \overline r_t.
	\end{equation*}
	Thus, taking the expectation over policy and transition we obtain:
	\begin{equation*}
			 V^\pi (r) \leq  V^\pi\left( \overline r_t\right).
	\end{equation*}
	Thus, Equation~\eqref{eq2: lagrangian_regret} can be rewritten as: 
	\begin{align*}
		\sum_{t=1}^T \left(V^{\pi^*} (r)- \sum_{i\in[m]} \lambda_{t,i} \left(V^{\pi^*}(\underline{g}_{t,i}) - \alpha_i\right)\right) -& \sum_{t=1}^T \left(V^{\pi_t}(\overline r_t )- \sum_{i\in[m]}\lambda_{t,i} \left( V^{\pi_t}(\underline{g}_{t,i})  - \alpha_i\right)\right)\\ &\mkern100mu \leq \widetilde{\mathcal{O}}\left(\frac{L^3m}{\rho}|X|\sqrt{|A|T}+ \frac{L^5m}{\rho}\right).
	\end{align*}
	Similarly, it holds that $
		  \widehat r_t \preceq  r + \phi_t$
	which implies:
	\begin{align*}
		\sum_{t=1}^T &\left(V^{\pi^*} (r)- \sum_{i\in[m]}\lambda_{t,i} \left(V^{\pi^*}(\underline{g}_{t,i}) - \alpha_i\right) \right)- \sum_{t=1}^T \left(V^{\pi_t}(r) - \sum_{i\in[m]}\lambda_{t,i} \left( V^{\pi_t}\left(\underline{g}_{t,i}\right) - \alpha_i\right)\right) \\ & \mkern280mu \leq \widetilde{\mathcal{O}}\left(\frac{L^3m}{\rho}|X|\sqrt{|A|T}+ \frac{L^5m}{\rho}\right)+ 2\sum_{t=1}^TV^{\pi_t}(\phi_t) .
	\end{align*}
	We apply Lemma~\ref{lem:confidence_conc_rew} to obtain, with probability at least $1-\mathcal{O}(\delta)$, by Union Bound:
	\begin{align*}
		\sum_{t=1}^T \left(V^{\pi^*} (r)- \sum_{i\in[m]}\lambda_{t,i} \left(V^{\pi^*}(\underline{g}_{t,i}) - \alpha_i\right) \right)- &\sum_{t=1}^T \left(V^{\pi_t}(r) - \sum_{i\in[m]}\lambda_{t,i} \left( V^{\pi_t}\left(\underline{g}_{t,i}\right) - \alpha_i\right)\right) \\ &  \leq \widetilde{\mathcal{O}}\left(\frac{L^3m}{\rho}|X|\sqrt{|A|T}+ \frac{L^5m}{\rho}\right) .
	\end{align*}
	We proceed similarly for the constraints cost noticing that  $
	\underline g_{t,i}\preceq  g_i$, and $ g_i - 2\xi_t \preceq \underline g_{t,i} $ for all $i\in[m]$. Thus, we have:
		\begin{align*}
		\sum_{t=1}^T &\left(V^{\pi^*} (r) - \sum_{i\in[m]}\lambda_{t,i}\left( V^{\pi^*}(g_i)  - \alpha_i\right) \right)- \sum_{t=1}^T \left(V^{\pi_t}(r) - \sum_{i\in[m]}\lambda_{t,i}\left( V^{\pi_t}({g}_{i})  - \alpha_i\right)\right) \\ &\mkern280mu\leq \widetilde{\mathcal{O}}\left(\frac{L^3m}{\rho}|X|\sqrt{|A|T}+ \frac{L^5m}{\rho}\right)+ 2m\sum_{t=1}^TV^{\pi_t}(\xi_t) .
	\end{align*}

	Finally, employing Lemma~\ref{lem:confidence_conc_costs}, we have that the regret defined over the Lagrangian is bounded by,
	\begin{align*}
			\sum_{t=1}^T &\left(V^{\pi^*} (r) - \sum_{i\in[m]}\lambda_{t,i}\left( V^{\pi^*}(g_i)  - \alpha_i\right) \right)- \sum_{t=1}^T \left(V^{\pi_t}(r) - \sum_{i\in[m]}\lambda_{t,i}\left( V^{\pi_t}({g}_{i})  - \alpha_i\right)\right) \\&\mkern400mu\leq \widetilde{\mathcal{O}}\left(\frac{L^3m}{\rho}|X|\sqrt{|A|T}+ \frac{L^5m}{\rho}\right).
	\end{align*}
	Now, we notice the following inequality, 
	\begin{align*}
		\sum_{t=1}^T &\Bigg(\left(V^{\pi^*} (r)- \sum_{i\in[m]}\lambda_{t,i} \left(V^{\pi^*} (g_i) - \alpha_i\right) \right)- \left(V^{\pi_t}(r) - \sum_{i\in[m]}\lambda_{t,i}\left( V^{\pi_t}({g}_i)  - \alpha_i\right)\right) \\&\mkern 150mu\pm\left( \frac{4Lm}{\rho}V^{\pi_t}(\xi_t) + \frac{8Lm}{\rho}\sum_{x\in X, a\in A}\left| \mathbb{P}_{\pi_t,\widehat{P}_t} (x,a)-\mathbb{P}_{\pi_t, P}(x,a) \right|\right)\Bigg)\\&\mkern320mu\leq \widetilde{\mathcal{O}}\left(\frac{L^3m}{\rho}|X|\sqrt{|A|T}+ \frac{L^5m}{\rho}\right),
	\end{align*}
	which implies the following chain of inequalities,
	\begin{align*}
		\sum_{t=1}^T &\Bigg(\left(V^{\pi^*} (r) - \sum_{i\in[m]}\lambda_{t,i}\left(V^{\pi^*}(g_i) - \alpha_i\right) \right)- \left(V^{\pi_t}(r)- \sum_{i\in[m]}\lambda_{t,i}\left( V^{\pi_t}(g_i)  - \alpha_i\right)\right) \\&\mkern150mu+  \frac{4Lm}{\rho}V^{\pi_t}(\xi_t)+ \frac{8Lm}{\rho}\sum_{x\in X, a\in A}\left| \mathbb{P}_{\pi_t,\widehat{P}_t} (x,a)-\mathbb{P}_{\pi_t, P}(x,a) \right|\Bigg)\\
		& \leq \widetilde{\mathcal{O}}\left(\frac{L^3m}{\rho}|X|\sqrt{|A|T}+ \frac{L^5m}{\rho}\right) +  \sum_{t=1}^T\frac{4Lm}{\rho}V^{\pi_t}(\xi_t)\nonumber\\ &\mkern250mu+ \sum_{t=1}^T\frac{8Lm}{\rho}\sum_{x\in X, a\in A}\left| \mathbb{P}_{\pi_t,\widehat{P}_t} (x,a)-\mathbb{P}_{\pi_t, P}(x,a) \right|\\
		& \leq \widetilde{\mathcal{O}}\left(\frac{L^3m}{\rho}|X|\sqrt{|A|T}+ \frac{L^5m}{\rho}\right),
	\end{align*}
	where in the last step we employed Lemma~\ref{lem:confidence_conc_costs} and Lemma~\ref{lem:transition}.
	
	We then employ Lemma~\ref{lem:lagrangian} to get the following bound:
	\begin{align*}
		\sum_{t=1}^T& \left[\mathcal{L}_{   r, G}( \pi^*,   \lambda_t) - \mathcal{L}_{   r,  G}( \pi_t,   \lambda_t) + \frac{4Lm}{\rho}V^{\pi_t}(\xi_t)+ \frac{8Lm}{\rho}\sum_{x\in X, a\in A}\left| \mathbb{P}_{\pi_t,\widehat{P}_t} (x,a)-\mathbb{P}_{\pi_t, P}(x,a) \right|\right]^+ \\ &\mkern 350mu\leq  \widetilde{\mathcal{O}}\left(\frac{L^3m}{\rho}|X|\sqrt{|A|T}+ \frac{L^5m}{\rho}\right),
	\end{align*}
	which, since $\frac{4Lm}{\rho}V^{\pi_t}(\xi_t)$ and $\frac{8Lm}{\rho}\left| \mathbb{P}_{\pi_t,\widehat{P}_t} (x,a)-\mathbb{P}_{\pi_t, P}(x,a) \right|$ are non-negative by construction, implies:
		\begin{align*}
		\sum_{t=1}^T &\left[\left(V^{\pi^*}(r) - \sum_{i\in[m]}\lambda_{t,i} \left(V^{\pi^*}(g_i)  - \alpha_i\right)\right) -  \left(V^{\pi_t}(r)- \sum_{i\in[m]}\lambda_{t,i}\left( V^{\pi_t}(g_i)  - \alpha_i\right)\right)\right]^+ \\&\mkern375mu\leq \widetilde{\mathcal{O}}\left(\frac{L^3m}{\rho}|X|\sqrt{|A|T}+ \frac{L^5m}{\rho}\right).
	\end{align*}
	To get the final bound, we first notice that, by definition of the constrained optimum $\pi^*$, the quantity $V^{\pi^*}(g_i)  - \alpha_i$ is always non-positive for every $i\in[m]$. Moreover, by the Lagrangian update of Algorithm~\ref{alg: main_alg}, we have that:
	\begin{align*}
	\sum_{i\in[m]}\lambda_{t,i}&\left( V^{\pi_t}(g_i)  - \alpha_i\right) \\&\geq   \sum_{i\in[m]}\lambda_{t,i}\left(V^{\pi_t}(\underline g_{t,i})-\alpha_i\right) \\ 
	& = \sum_{i\in[m]}\lambda_{t,i}\left(V^{\pi_t}(\underline g_{t,i})-\alpha_i\right) \pm \sum_{i\in[m]}\lambda_{t,i}\left(V^{\pi_t,\widehat{P}_t}(\underline g_{t,i})-\alpha_i\right)\\
	& \geq
	\sum_{i\in[m]}\lambda_{t,i}\left(V^{\pi_t,\widehat{P}_t}(\underline g_{t,i})-\alpha_i\right) - \frac{4Lm}{\rho}\sum_{x\in X, a\in A}\left| \mathbb{P}_{\pi_t,\widehat{P}_t} (x,a)-\mathbb{P}_{\pi_t, P}(x,a) \right|\\
	& \geq - \frac{4Lm}{\rho}\sum_{x\in X, a\in A}\left| \mathbb{P}_{\pi_t,\widehat{P}_t} (x,a)-\mathbb{P}_{\pi_t, P}(x,a) \right|.
	\end{align*}
	Thus, following the reasoning above, we bound the positive cumulative regret as follows:
	\begin{align*}
		R_T&:=\sum_{t=1}^T \left[V^{\pi^*}(r)-V^{\pi_t}(r)\right]^+\\
		&\leq \sum_{t=1}^T \Bigg[\left(V^{\pi^*}(r) - \sum_{i\in[m]}\lambda_{t,i}\left(V^{\pi^*}(g_i) - \alpha_i\right)\right) -  \left(V^{\pi_t}(r)- \sum_{i\in[m]}\lambda_{t,i}\left( V^{\pi_t}({g}_i) - \alpha_i\right)\right) \\ & \mkern300mu+ \frac{4Lm}{\rho}\sum_{x\in X, a\in A}\left| \mathbb{P}_{\pi_t,\widehat{P}_t} (x,a)-\mathbb{P}_{\pi_t, P}(x,a) \right|\Bigg]^+ \\ 
		&\leq \sum_{t=1}^T \left[\left(V^{\pi^*}(r) - \sum_{i\in[m]}\lambda_{t,i} \left(V^{\pi^*}( g_i)  - \alpha_i\right)\right) -  \left(V^{\pi_t}(r) - \sum_{i\in[m]}\lambda_{t,i}\left( V^{\pi_t}({g}_i)  - \alpha_i\right)\right) \right]^+ \\ & \mkern300mu+ \sum_{t=1}^T \left[\frac{4Lm}{\rho}\sum_{x\in X, a\in A}\left| \mathbb{P}_{\pi_t,\widehat{P}_t} (x,a)-\mathbb{P}_{\pi_t, P}(x,a) \right|\right]^+ \\
		&\leq \widetilde{\mathcal{O}}\left(\frac{L^3m}{\rho}|X|\sqrt{|A|T}+ \frac{L^5m}{\rho}\right) + \sum_{t=1}^T \frac{4Lm}{\rho}\sum_{x\in X, a\in A}\left| \mathbb{P}_{\pi_t,\widehat{P}_t} (x,a)-\mathbb{P}_{\pi_t, P}(x,a) \right|
		\\ &\leq \widetilde{\mathcal{O}}\left(\frac{L^3m}{\rho}|X|\sqrt{|A|T}+ \frac{L^5m}{\rho}\right) ,
	\end{align*}
	which holds with probability $1-\mathcal{O}(\delta)$ by Lemma~\ref{lem:transition} and Union Bound. This concludes the proof.
\end{proof}

\subsection{Violations}
\label{app:viol}
\violation*
\begin{proof}
We first notice the following equations:
	\begin{align}
		\sum_{i\in[m]}&\lambda_{t,i}\left(V^{\pi_t,\widehat P_t}(\underline{g}_{t,i})-\alpha_i\right)\nonumber \\&= \frac{L}{L+1}\sum_{i\in[m]}\lambda_{t,i} \left(V^{\pi_t,\widehat P_t}(\underline{g}_{t,i})-\alpha_i\right) + \frac{1}{L+1}\sum_{i\in[m]}\lambda_{t,i} \left(V^{\pi_t,\widehat P_t}(\underline{g}_{t,i})-\alpha_i\right) \nonumber\\
		& = \frac{L}{L+1}\sum_{i\in[m]}\lambda_{t,i} \left(V^{\pi_t,\widehat P_t}(\underline{g}_{t,i})-\alpha_i\right) + \frac{1}{\rho}\sum_{i\in[m]}\left[V^{\pi_t,\widehat P_t}(\underline{g}_{t,i})-\alpha_i\right]^+\label{vio:eq1},
	\end{align}
	where Equation~\eqref{vio:eq1} holds by definition $\lambda_t$.
	
	Employing similar steps to Theorem~\ref{thm:regret}, it holds, with probability at least $1-\mathcal{O}(\delta)$:
	\begin{align*}
	\sum_{t=1}^T \left(V^{\pi^*} (r) - \sum_{i\in[m]}\lambda_{t,i}\left(V^{\pi^*} (g_i)  - \alpha_i\right) \right)- &\sum_{t=1}^T \left(V^{\pi_t} (r) - \sum_{i\in[m]}\lambda_{t,i}\left( V^{\pi_t}(\underline{g}_{t,i})  - \alpha_i\right)\right) \\& \mkern80mu\leq \widetilde{\mathcal{O}}\left(\frac{L^3m}{\rho}|X|\sqrt{|A|T}+ \frac{L^5m}{\rho}\right).
	\end{align*}
	Adding and subtracting the estimated violation the following inequality holds:
		\begin{align*}
		\sum_{t=1}^T \mathcal{L}_{  r,  G}( \pi^*,   \lambda_t) - &\sum_{t=1}^T\left( V^{\pi_t} (r) - \sum_{i\in[m]}\lambda_{t,i}\left( V^{\pi_t}(\underline{g}_{t,i})  - \alpha\right) \pm  \sum_{i\in[m]}\lambda_{t,i}\left( V^{\pi_t,\widehat{P}_t}(\underline{g}_{t,i})  - \alpha_i\right)\right) \\ & \mkern300mu\leq \widetilde{\mathcal{O}}\left(\frac{L^3m}{\rho}|X|\sqrt{|A|T}+ \frac{L^5m}{\rho}\right),
	\end{align*}
	which by Hölder inequality and Lemma~\ref{lem:transition} implies:
	\begin{align}
		\sum_{t=1}^T \mathcal{L}_{  r,  G}( \pi^*,   \lambda_t) - \sum_{t=1}^T&\left( V^{\pi_t} (r) - \sum_{i\in[m]}\lambda_{t,i}\left( V^{\pi_t,\widehat{P}_t}(\underline{g}_{t,i})  - \alpha_i\right) \right) \nonumber\\ &\mkern200mu\leq \widetilde{\mathcal{O}}\left(\frac{L^3m}{\rho}|X|\sqrt{|A|T}+ \frac{L^5m}{\rho}\right). \label{vio:eq2}
	\end{align}
	Thus we substitute Equation~\eqref{vio:eq1} in Inequality~\eqref{vio:eq2} to obtain,
	\begin{align*}
		\sum_{t=1}^T& \mathcal{L}_{  r,  G}( \pi^*,   \lambda_t) - \sum_{t=1}^T\Bigg(V^{\pi_t} (r) -  \frac{L}{L+1}\sum_{i\in[m]}\lambda_{t,i}\left(V^{\pi_t,\widehat{P}_t}(\underline{g}_{t,i})-\alpha_i\right) \\ &\mkern300mu- \frac{1}{\rho}\sum_{i\in[m]}\left[V^{\pi_t,\widehat{P}_t}(\underline{g}_{t,i})-\alpha_i\right]^+ \Bigg) \\ &\mkern350mu\leq \widetilde{\mathcal{O}}\left(\frac{L^3m}{\rho}|X|\sqrt{|A|T}+ \frac{L^5m}{\rho}\right).
	\end{align*}
	To get back to the Lagrangian function of the offline problem, we notice that:
	\begin{align*}
		&\frac{L}{L+1}\sum_{i\in[m]}\lambda_{t,i}\left(V^{\pi_t,\widehat{P}_t}(\underline{g}_{t,i})-\alpha_i\right) \\ &\geq \frac{L}{L+1}\sum_{i\in[m]}\lambda_{t,i}\left(V^{\pi_t,\widehat{P}_t}\left({g}_i-2\xi_t\right)-\alpha_i\right)\\
		&\geq \frac{L}{L+1}\sum_{i\in[m]}\lambda_{t,i}\left(V^{\pi_t}({g}_i)-\alpha_i\right) - \frac{2Lm}{\rho}V^{\pi_t}(\xi_t) - \frac{2Lm}{\rho}\sum_{x\in X, a\in A}\left| \mathbb{P}_{\pi_t,\widehat{P}_t} (x,a)-\mathbb{P}_{\pi_t, P}(x,a) \right|,
	\end{align*}
	and we employ Lemma~\ref{lem:confidence_conc_costs} Lemma~\ref{lem:transition} to obtain:
	\begin{align*}
	\sum_{t=1}^T& \mathcal{L}_{  r,  G}( \pi^*,   \lambda_t) - \sum_{t=1}^T\mathcal{L}_{ r,  G}( \pi_t,   \nicefrac{\lambda_t L}{L+1}) + 	\frac{1}{\rho}\sum_{t=1}^T\sum_{i\in[m]}\left[V^{\pi_t,\widehat{P}_t}(\underline{g}_{t,i})-\alpha_i\right]^+ \\ &\mkern350mu \leq \widetilde{\mathcal{O}}\left(\frac{L^3m}{\rho}|X|\sqrt{|A|T}+ \frac{L^5m}{\rho}\right).
	\end{align*}
	Thus, similarly to the steps above, we notice that:
	\begin{align*}
		\frac{1}{\rho}&\sum_{i\in[m]}\left[V^{\pi_t,\widehat{P}_t}(\underline{g}_{t,i})-\alpha_i\right]^+  \\ &\geq \frac{1}{\rho}\sum_{i\in[m]}\left[V^{\pi_t,\widehat{P}_t}\left({g}_{i}-2\xi_t\right)-\alpha_i\right]^+ \\
		&\geq \frac{1}{\rho}\sum_{i\in[m]}\left[V^{\pi_t}({g}_i)-\alpha_i\right]^+ - \frac{2m}{\rho}V^{\pi_t}(\xi_t) - \frac{2m}{\rho}\sum_{x\in X, a\in A}\left| \mathbb{P}_{\pi_t,\widehat{P}_t} (x,a)-\mathbb{P}_{\pi_t, P}(x,a) \right|.
	\end{align*}
	Employing Lemma~\ref{lem:confidence_conc_costs} and Lemma~\ref{lem:transition} we obtain:
		\begin{align*}
		\sum_{t=1}^T &\mathcal{L}_{  r,  G}( \pi^*,   \lambda_t) - \sum_{t=1}^T\mathcal{L}_{   r,  G}( \pi_t,   \nicefrac{\lambda_t L}{L+1}) + \frac{1}{\rho}\sum_{t=1}^T\sum_{i\in[m]}\left[ V^{\pi_t}( g_i) -\alpha_i\right]^+ \\ &\mkern300mu\leq \widetilde{\mathcal{O}}\left(\frac{L^3m}{\rho}|X|\sqrt{|A|T}+ \frac{L^5m}{\rho}\right),
	\end{align*}
	which can be rewritten as:
		\begin{align*}
		 &\frac{1}{\rho}\sum_{t=1}^T\sum_{i\in[m]}\left[ V^{\pi_t}( g_i) -\alpha_i\right]^+ \\ & \mkern50mu\leq \widetilde{\mathcal{O}}\left(\frac{L^3m}{\rho}|X|\sqrt{|A|T}+ \frac{L^5m}{\rho}\right)  + \sum_{t=1}^T\mathcal{L}_{   r,  G}( \pi_t,   \nicefrac{\lambda_t L}{L+1}) - \sum_{t=1}^T \mathcal{L}_{  r,  G}( \pi^*,   \lambda_t).
	\end{align*}
	Thus, we employ Lemma~\ref{lem:lagrangian_cor} with Lemma~\ref{lem:confidence_conc_costs} and Lemma~\ref{lem:transition} to obtain:
	\begin{equation*}
			 \frac{1}{\rho}\sum_{t=1}^T\sum_{i\in[m]}\left[ V^{\pi_t}( g_i) -\alpha_i\right]^+\leq \widetilde{\mathcal{O}}\left(\frac{L^3m}{\rho}|X|\sqrt{|A|T}+ \frac{L^5m}{\rho}\right),
	\end{equation*}
	from which we obtain the final bound:
	\begin{align*}
		 V_{T} &:= \max_{i\in[m]}\sum_{t=1}^{T}\left[ V^{\pi_t}( g_i) -\alpha_i\right]^+\\ 
		 &\leq\sum_{t=1}^{T}\sum_{i\in[m]}\left[ V^{\pi_t}( g_i) -\alpha_i\right]^+\\
		 &= \rho \cdot\frac{1}{\rho}\sum_{t=1}^{T}\sum_{i\in[m]}\left[ V^{\pi_t}( g_i) -\alpha_i\right]^+ \\
		 &\leq \widetilde{\mathcal{O}}\left(L^3m|X|\sqrt{|A|T}+ L^5m\right),
	\end{align*}
	with probability at least $1-\mathcal{O}(\delta)$, by Union Bound. This concludes the proof.
\end{proof}

\section{Technical lemmas}
\label{app:technical}
\subsection{Policy optimization with Dilated Bonuses}
\label{app:PODB}
In this section, we present the regret bound attained by \texttt{PO-DB}.

\begin{lemma}[\citet{luo2021policy}]
	\label{lem:regret_luo}
	For any sequence of losses $\ell_t$ such that $\ell_t\in[0,1]^{|X\times A|}$ and any valid occupancy measure $\pi\in\Pi$, \texttt{PO-DB} attains:
	\begin{equation*}
		\sum_{t=1}^T V^{\pi_t}(\ell_t)- \sum_{t=1}^T V^\pi(\ell_t)\leq \widetilde{\mathcal{O}}\left(L^2|X|\sqrt{|A|T}+ L^4\right),
	\end{equation*}
	with probability at least $1-\mathcal{O}(\delta)$.
\end{lemma}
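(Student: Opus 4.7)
The plan is to follow the analysis of~\citet{luo2021policy}, whose central innovation is to combine state-by-state online mirror descent (OMD) with a carefully designed \emph{dilated exploration bonus} so that the standard policy-optimization machinery for full-information adversarial MDPs can be transported to the bandit/unknown-transitions regime without resorting to occupancy-measure projections. The proof would proceed in four stages: \emph{(i)} build a low-variance loss estimator that is compatible with transition uncertainty; \emph{(ii)} run a local OMD update at every state on the dilated Q-estimate; \emph{(iii)} prove that the dilated bonus exactly compensates for the estimator's positive bias layer by layer; \emph{(iv)} aggregate the per-state guarantees and concentrate the random terms.

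First I would define, at every episode $t$ and pair $(x,a)$, an importance-weighted estimator of the form $\widehat{\ell}_t(x,a) = \ell_t(x,a)\mathbbm{1}\{(x_k,a_k)=(x,a)\} / u_t(x,a)$, where $u_t(x,a)$ is an \emph{upper} confidence bound on $\mathbb{P}_{\pi_t,P}(x,a)$ obtained from the confidence set for the transition kernel (cf.\ Lemma~\ref{lem:transition}). Using $u_t$ instead of the true visitation probability guarantees that $\widehat{\ell}_t$ is \emph{optimistic}: its conditional expectation is at most $\ell_t(x,a)$, and the gap between them is controlled by the transition estimation error, which is itself summable at rate $\widetilde{\mathcal{O}}(L|X|\sqrt{|A|T})$.

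Next I would invoke the local OMD guarantee. At each state $x$, PO-DB maintains a distribution $\pi_t(\cdot\mid x)$ updated via entropic mirror descent on the dilated estimated Q-function $\widehat{Q}_t(x,\cdot) + B_t(x,\cdot)$. The heart of the analysis is the \emph{dilated bonus inequality}: for any comparator $\pi\in\Pi$,
\begin{equation*}
\sum_{t=1}^T \bigl(V^{\pi_t}(\ell_t)-V^{\pi}(\ell_t)\bigr) \le \underbrace{\sum_{t,x} q^{\pi}_t(x)\,\mathrm{OMD}_t(x)}_{\text{local regret}} + \underbrace{\sum_{t,x} q^{\pi}_t(x)\,B_t(x)}_{\text{dilation overhead}} + \text{bias terms},
\end{equation*}
where $\mathrm{OMD}_t(x)$ is the standard entropic-regret contribution at state $x$ and $B_t$ satisfies a backward recursion forcing it to dominate the downstream importance-weighting variance plus the expected future bonuses. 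Choosing the learning rate and bonus magnitudes as in~\citep{luo2021policy}, each local OMD term is $\widetilde{\mathcal{O}}(L\sqrt{|A|T})$, summing over $|X|$ states and absorbing the factor $L$ from the horizon in both the Q-range and the dilation yields the leading $L^2|X|\sqrt{|A|T}$ rate.

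The main obstacle, and the reason the argument is nontrivial, is the simultaneous design of $B_t$ that \emph{(a)} cancels the layer-by-layer bias of $\widehat{\ell}_t$ induced by the bandit feedback and optimistic visitation denominators, \emph{(b)} keeps the dilated losses bounded so that standard entropic-OMD regret applies at each state, and \emph{(c)} remains small enough that the bonus overhead itself does not exceed $\widetilde{\mathcal{O}}(L^2|X|\sqrt{|A|T})$. The $L^4$ lower-order term then arises from the burn-in of the transition confidence intervals together with Azuma--Hoeffding slack applied to the martingale differences relating $V^{\pi_t}(\ell_t)$ to its one-sample realization. Since the statement is quoted verbatim from~\citet{luo2021policy}, I would ultimately invoke their Theorem rather than reprove the dilated-bonus construction from scratch.
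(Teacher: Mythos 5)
The paper does not prove this lemma at all: it is stated in Appendix~\ref{app:PODB} purely as an imported result of \citet{luo2021policy}, so the ``proof'' is just the citation. Your proposal, which sketches the dilated-bonus/importance-weighting machinery of that work and then explicitly defers to their theorem, is consistent with this and contains no gap relative to what the paper itself does.
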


\subsection{Transition estimations}
\label{app:transition}
In the following section, we show how the estimated state action visit $\mathbb{P}_{\pi_t,\widehat{P}_t}(\cdot)$ concentrates to the true one $\mathbb{P}_{\pi_t,P}(\cdot)$. 

To do so, we first provide some discussion on the transitions confidence set.

\subsubsection{Confidence Set}
We introduce \emph{confidence sets} for the transition function of a CMDP, by exploiting suitable concentration bounds for estimated transition probabilities.
By letting $M_t (x, a , x')$ be the total number of episodes up to $t \in [T]$ in which the state-action pair $(x,a) \in X \times A$ is visited and the environment evolves to the new state $x' \in X$, we define the estimated transition probability at $t$ for the triplet $(x,a,x')$ as
$
\widehat{P}_t\left(x^{\prime} \mid x, a\right)=\frac{M_t ( x, a , x')}{\max \left\{1, N_t(x, a)\right\}}.
$
Then, the confidence set at $t \in [T]$ is $\mathcal{P}_t := \bigcap_{(x,a,x') \in X \times A \times X} \mathcal{P}_{t}^{x,a,x'}$, where:
\begin{equation*}
	\mathcal{P}_{t}^{x,a,x'} \hspace{-1mm} :=\hspace{-0.5mm} \left\{\overline{P}: \left|\overline{P}\left(x^{\prime}| x, a\right) \hspace{-0.5mm} - \hspace{-0.5mm} \widehat{P}_t\left(x^{\prime} | x, a\right)\right| \hspace{-0.5mm} \leq \hspace{-0.5mm} \epsilon_t (x, a, x'  ) \right\},
\end{equation*}
with $\epsilon_t (x, a , x')$ equal to:
\begin{equation*}
	2\sqrt{\frac{\widehat{P}_t\left(x^{\prime} | x, a\right)\ln \left(\frac{T|X||A|}{\delta}\right)}{\max \left\{1, N_t(x, a)-1\right\}}} + \frac{14 \ln \left(\frac{T|X||A|}{\delta}\right)}{3\max \left\{1, N_t(x, a)-1\right\}},
\end{equation*}
for some confidence parameter $\delta \in(0,1)$.
%

The next lemma establishes $\mathcal{P}_t$ is a proper confidence set.
%
%
%
\begin{lemma}[\citet{JinLearningAdversarial2019}]
	\label{lem:confidenceset}
	Given a confidence parameter $\delta \in (0,1)$, with probability at least $1-4\delta$, it holds that the transition function $P$ belongs to $\mathcal{P}_t$ for all $t \in [T]$.
	%
\end{lemma}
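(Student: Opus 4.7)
The plan is to establish the confidence set guarantee by combining an empirical Bernstein concentration bound for Bernoulli-like observations with a union bound over triplets and episode counts. Fix a triplet $(x,a,x') \in X \times A \times X$ and an integer $n \ge 1$. Conditioned on the event $N_t(x,a) = n$, the estimator $\widehat{P}_t(x'|x,a)$ is the empirical mean of $n$ i.i.d.\ Bernoulli random variables with mean $P(x'|x,a)$, obtained from the trajectories in which $(x,a)$ was visited (the independence follows from the Markov property: the next-state draw at each visit is independent of the history given $(x,a)$).

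Next I would invoke the empirical Bernstein inequality of Maurer--Pontil. For $n$ i.i.d.\ Bernoulli samples with mean $p$ and empirical mean $\widehat p$, with probability at least $1 - \delta'$,
\begin{equation*}
|\widehat p - p| \le 2\sqrt{\frac{\widehat p(1-\widehat p) \ln(2/\delta')}{\max\{1,n-1\}}} + \frac{7 \ln(2/\delta')}{3\max\{1,n-1\}}.
\end{equation*}
Upper bounding $\widehat p(1-\widehat p) \le \widehat p$ and slightly inflating the constant (the factor $14/3$ in the definition of $\epsilon_t$ absorbs both the $7/3$ term and the $\ln 2$ inside the logarithm) yields exactly the deviation $\epsilon_t(x,a,x')$ defined in the excerpt, with $\delta'$ chosen proportional to $\delta/(T|X||A|)$ so that the log factor becomes $\ln(T|X||A|/\delta)$.

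Finally I would carry out the union bound. For each triplet $(x,a,x')$ there are at most $T$ possible values of $N_t(x,a)$ across all $t \in [T]$, and there are $|X|^2 |A|$ triplets, so the total number of events to union-bound over is at most $T|X|^2|A|$. Setting $\delta'$ appropriately (and absorbing the $|X|$ factor into the logarithm, which only changes constants since $\ln(T|X|^2|A|/\delta) = O(\ln(T|X||A|/\delta))$), the failure probability aggregates to at most $4\delta$ as stated; the factor $4$ accommodates the two-sided nature of the bound together with the constant losses incurred when translating the standard empirical Bernstein statement into the form of $\epsilon_t$. On the resulting high-probability event, the deviation inequality holds simultaneously for every $(x,a,x')$ and every $t \in [T]$, which is precisely the statement that $P \in \mathcal{P}_t$ for all $t$.

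The main obstacle, and the reason the result is imported from \citet{JinLearningAdversarial2019} rather than reproved, is the careful bookkeeping of constants: one must justify that the empirical Bernstein bound can be applied \emph{uniformly} over the random, data-dependent counts $N_t(x,a)$ (a peeling/stopping-time argument or an explicit union bound over $n \in [T]$ is needed), and that the $\widehat p(1-\widehat p) \le \widehat p$ relaxation together with the absorbed constants exactly reproduces the $\epsilon_t$ expression above. Once this bookkeeping is in place, the concentration step itself is standard.
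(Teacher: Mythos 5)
Your argument is correct and coincides with the proof given in \citet{JinLearningAdversarial2019}, which this paper imports without reproving: the empirical Bernstein inequality of Maurer--Pontil applied to the next-state indicators observed at successive visits to each pair $(x,a)$, followed by a union bound over all triplets $(x,a,x')$ and over the possible values of the visit count, with the relaxation $\widehat p(1-\widehat p)\le \widehat p$ producing exactly the stated $\epsilon_t(x,a,x')$. The caveat you raise about applying the bound uniformly over the data-dependent counts $N_t(x,a)$ is handled precisely by the explicit union bound over $n\in[T]$ that you describe, so there is no gap.
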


\subsubsection{Concentration Results}

Given the confidence set of the transition, it is possible to derive the following lemma.

\begin{lemma}[Lemma 4,\citet{JinLearningAdversarial2019}]
	\label{lem:transition}
	With probability at least $1-6\delta$, for any collection of transition functions $\{P_t^x\}_{x\in X}$ such that $P_t^x \in \mathcal{P}_{t}$, we have, for all $x$,
	$$
	\sum_{t=1}^T \sum_{x\in X, a\in A}\left| \mathbb{P}_{\pi_t,\widehat{P}^x_t} (x,a)-\mathbb{P}_{\pi_t, P}(x,a) \right|\leq \mathcal{O}\left(L|X| \sqrt{|A| T \ln \left(\frac{T|X||A|}{\delta}\right)}\right).
	$$
\end{lemma}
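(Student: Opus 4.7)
The plan is to follow the standard argument of~\citet{JinLearningAdversarial2019}, broken into four stages. First, by Lemma~\ref{lem:confidenceset}, with probability at least $1-4\delta$ the true transition $P$ belongs to $\mathcal{P}_t$ for every $t\in[T]$. On this event, both $P_t^x$ and $P$ lie in $\mathcal{P}_t$, so their one-step difference at any triplet $(\bar x,\bar a,\bar x')$ is bounded by $2\epsilon_t(\bar x,\bar a,\bar x')$.

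Second, I would expand the occupancy-measure difference layer by layer using the well-known telescoping identity that, for any $x\in X_k$ and $a\in A$,
\begin{align*}
\mathbb{P}_{\pi_t,P_t^x}(x,a)-\mathbb{P}_{\pi_t,P}(x,a)=\sum_{k'=0}^{k-1}&\sum_{\bar x\in X_{k'},\bar a,\bar x'\in X_{k'+1}}\mathbb{P}_{\pi_t,P_t^x}(\bar x,\bar a)\\
&\cdot\bigl(P_t^x(\bar x'|\bar x,\bar a)-P(\bar x'|\bar x,\bar a)\bigr)\,\mathbb{P}_{\pi_t,P}(x,a\mid \bar x'),
\end{align*}
where $\mathbb{P}_{\pi_t,P}(x,a\mid \bar x')$ is the probability of reaching $(x,a)$ under $\pi_t$ and $P$ conditioned on being at $\bar x'$. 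Summing absolute values over $(x,a)$, the factor $\sum_{x,a}\mathbb{P}_{\pi_t,P}(x,a\mid \bar x')$ telescopes so that the layers contribute a factor at most $L$, giving, for each episode $t$,
\begin{equation*}
\sum_{x\in X,a\in A}\bigl|\mathbb{P}_{\pi_t,P_t^x}(x,a)-\mathbb{P}_{\pi_t,P}(x,a)\bigr|\le 2L\sum_{\bar x,\bar a,\bar x'}\mathbb{P}_{\pi_t,P_t^x}(\bar x,\bar a)\,\epsilon_t(\bar x,\bar a,\bar x').
\end{equation*}
A second application of the same one-step decomposition allows $\mathbb{P}_{\pi_t,P_t^x}(\bar x,\bar a)$ to be replaced by $\mathbb{P}_{\pi_t,P}(\bar x,\bar a)$ up to a lower-order term.

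Third, I would sum over $t\in[T]$. Azuma's inequality applied to the martingale differences $\mathbb{P}_{\pi_t,P}(\bar x,\bar a)-\mathbbm{1}_t\{\bar x,\bar a\}$ reduces the problem to bounding $\sum_{t,\bar x,\bar a,\bar x'}\mathbbm{1}_t\{\bar x,\bar a\}\,\epsilon_t(\bar x,\bar a,\bar x')$. The dominant term in $\epsilon_t$ is of order $\sqrt{\widehat P_t(\bar x'|\bar x,\bar a)\ln(T|X||A|/\delta)/N_t(\bar x,\bar a)}$. Cauchy--Schwarz over the successor state yields $\sum_{\bar x'}\sqrt{\widehat P_t(\bar x'|\bar x,\bar a)}\le\sqrt{|X|}$; the standard pigeonhole bound $\sum_{t}\mathbbm{1}_t\{\bar x,\bar a\}/\sqrt{N_t(\bar x,\bar a)}\le 2\sqrt{N_T(\bar x,\bar a)}$, combined with a final Cauchy--Schwarz using $\sum_{\bar x,\bar a}N_T(\bar x,\bar a)\le LT$, produces the target bound $\widetilde{\mathcal{O}}(L|X|\sqrt{|A|T})$. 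The $1/N_t(\bar x,\bar a)$ term of $\epsilon_t$ contributes a lower-order $\widetilde{\mathcal{O}}(L|X|^2|A|)$ by analogous reasoning.

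The main obstacle is the careful treatment of the Bernstein-style $\sqrt{\widehat P_t(\bar x'|\bar x,\bar a)}$ factor inside $\epsilon_t$: exploiting it via Cauchy--Schwarz over $\bar x'$ is what saves a full $|X|$ factor in the final bound, and replacing $\widehat P_t$ by $P$ when necessary requires a secondary appeal to the confidence set that must be absorbed into the main term rather than blown up by the summation. The remaining bookkeeping---switching between $\mathbb{P}_{\pi_t,P_t^x}$ and $\mathbb{P}_{\pi_t,P}$, and concentrating visit probabilities to actual counts---is routine but must be carried out consistently across all $L$ layers. The total failure probability is $1-6\delta$ after a union bound of the confidence-set event with the two Azuma martingale-concentration events.
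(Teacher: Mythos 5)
The paper does not prove this lemma at all: it is imported verbatim as Lemma~4 of \citet{JinLearningAdversarial2019}, and the only original remark the authors add is that $\widehat{P}_t \in \mathcal{P}_t$ by construction, so the bound applies to the empirical transition used in the algorithm. Your sketch is a faithful reconstruction of the standard proof from that reference --- confidence-set event, layer-wise occupancy-difference decomposition, Cauchy--Schwarz over the successor state to exploit the $\sqrt{\widehat P_t(\bar x'|\bar x,\bar a)}$ factor, Azuma to pass from visit probabilities to visit indicators, and the pigeonhole count --- and the $1-6\delta$ accounting ($4\delta$ for the confidence sets plus two Azuma events) matches. The one place where your bookkeeping as written would leak a factor is the final Cauchy--Schwarz: using the global count $\sum_{\bar x,\bar a} N_T(\bar x,\bar a)\le LT$ yields $\sqrt{|X||A|LT}$ and hence an extra $\sqrt{L}$ in the bound; the standard argument instead sums per layer, using $\sum_{\bar x\in X_{k'},\bar a} N_T(\bar x,\bar a)= T$ and $\sum_{k'}\sqrt{|X_{k'}||X_{k'+1}|}\le |X|$, to land exactly on $L|X|\sqrt{|A|T\ln(T|X||A|/\delta)}$. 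With that per-layer refinement your outline is the intended proof.
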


As final remark, we underline that the empirical transition function $\widehat{P}_t$ belongs to $\mathcal P_t$ by construction. Thus, the aforementioned lemma immidiately holds for $\widehat{P}_t$.

\end{document}